\newcommand{\norm}[1]{\left\lVert#1\right\rVert}
\newtheorem{mydef}{Definition}
\newcommand\js[1]{{\color{black}{#1}}}
\crefname{figure}{Fig.}{Fig.}
\Crefname{figure}{Figure}{Figure}
\crefname{equation}{}{}
\Crefname{equation}{Equation}{Equation}
\begin{document}
\mainmatter              % start of a contribution
\title{Deadlock Analysis and \\ Resolution for Multi-Robot Systems \\(Extended Version)}
\titlerunning{Deadlock Analysis for Multirobot Systems}  % abbreviated title (for running head)
\institute{The Robotics Institute, Carnegie Mellon University \\{\tt\small \{jaskarag,cliu6,sycara\}@andrew.cmu.edu}  }
\author{Jaskaran~Singh~Grover \and Changliu~Liu \and Katia~Sycara
\thanks{This research was supported by the DARPA Cooperative Agreement HR00111820051.}}
\date{}
\maketitle
\begin{abstract}
Collision avoidance for multirobot systems is a well studied problem. Recently, control barrier functions (CBFs) have been proposed for synthesizing controllers  guarantee collision avoidance and goal stabilization for multiple robots. However, it has been noted  reactive control synthesis methods (such as CBFs) are prone to \textit{deadlock}, an equilibrium of system dynamics  causes robots to come to a standstill before reaching their goals. In this paper, we formally derive characteristics of deadlock in a multirobot system  uses CBFs. We propose a novel approach to analyze deadlocks resulting from optimization based controllers (CBFs) by borrowing tools from duality theory and graph enumeration. Our key insight is  system deadlock is characterized by a force-equilibrium on robots and we show how complexity of deadlock analysis increases approximately exponentially with the number of robots. This analysis allows us to interpret deadlock as a subset of the state space, and we prove  this set is non-empty, bounded and located on the boundary of the safety set. Finally, we use these properties to develop a provably correct decentralized algorithm for deadlock resolution which ensures  robots converge to their goals while avoiding collisions. We show simulation results of the resolution algorithm for two and three robots and experimentally validate this algorithm on Khepera-IV robots.
\keywords{Collision Avoidance, Optimization and Optimal Control}
\end{abstract}
\section{Introduction}

Multirobot systems have been studied thoroughly for solving a variety of complex tasks such as search and rescue \cite{kantor2003distributed}, sensor coverage \cite{cortes2004coverage} and environmental exploration \cite{burgard2005coordinated}. Global coordinated behaviors result from executing local control laws on individual robots interacting with their neighbors \cite{ogren2002control}, \cite{olfati2007consensus}. Typically, the local controllers running on these robots are a combination of a task-based controller responsible for completion of a primary objective and a reactive collision avoidance controller. However, including a hand-engineered safety control no longer guarantees  the original task will be satisfied \cite{borrmann2015control}. 
This problem becomes all the more pronounced when the number of robots increases. Motivated by this bottleneck, our paper focuses on an algorithmic analysis of the performance-safety trade-offs  result from augmenting a task-based controller with collision avoidance constraints as done using CBF based quadratic programs (QPs) \cite{ames2017control}. Although CBF-QPs mediate between safety and performance in a rigorous way, yet ultimately they are distributed local controllers. Such approaches exhibit a lack of look-ahead, which causes the robots to be trapped in \textit{deadlocks} as noted in \cite{petti2005safe,o1989deadlockfree,wang2017safety}. In deadlock, the robots stop while still being away from their goals and persist in this state unless intervened.
\begin{figure}[t]
    \centering
    \includegraphics[trim={5.5cm 17.5cm 7.7cm 2.8cm},clip,width=.4\linewidth]{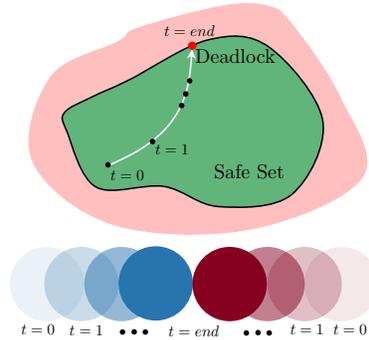}
    \setlength{\belowcaptionskip}{-15pt}
    \caption{Two robots moving towards each other fall in deadlock. System state converges to the boundary of  safe set.}
    \label{fig:deadlock_cartoon}
\end{figure}
This occurs because robots reach a state where conflict becomes inevitable, \textit{i.e.} a control favoring goal stabilization will violate safety (see red dot in \cref{fig:deadlock_cartoon}). Hence, the only feasible strategy is to remain static. Although small perturbations can steer the system away from deadlock, there is no guarantee  robots will not fall back in deadlock. To circumvent these issues, this work addresses the following technical questions:

\begin{enumerate}
    \item What are the characteristics of a system in deadlock ?
    \item What all geometric configurations of robots are admissible in deadlock ?
    \item How can we leverage this information to provably exit deadlock using decentralized controllers?
\end{enumerate}

To address these questions, we first review technical definitions for CBF based QPs \cite{wang2017safety}  to synthesize controllers for collision avoidance and goal stabilization  in \cref{CBFReview}. In  \cref{Analysis for deadlock}, we recall the definition of deadlock and use KKT conditions to motivate a novel set theoretic interpretation of deadlock with an eye towards devising controllers  evade/exit this set. We use graph enumeration to highlight the combinatorial complexity of geometric configurations of robots admissible in deadlock. Following this development, in \cref{Analysis for two robots} and \cref{Analysis for three robots}, we focus on the easier to analyze cases for two and three robots respectively and examine mathematical properties of the deadlock set for these cases. We show  this set is on the boundary of the safety set, is non-empty and bounded. In \cref{deadlock resolution}, we show how to design a provably-correct decentralized controller to make the robots exit deadlock. We demonstrate this strategy on two and three robots in simulation, and experimentally on Khepera-IV robots. Finally, we conclude with directions for future work.
\section{Prior Work}

\label{PriorWork}
Several existing  methods provide  inspiration  for  the  results  presented  here. Of these, two are especially relevant: in the first category, we describe prior methods for collision avoidance and in the second, we focus on deadlock resolution.
\subsection{Prior Work on Avoidance Control}

Avoidance control is a well-studied problem with immediate applications for planning collision-free motions for multirobot systems. Classical avoidance control assumes a worst case scenario with no cooperation between robots \cite{leitmann1977avoidance,leitmann1983note}
Cooperative collision avoidance is explored in \cite{stipanovic2007cooperative,hokayem2010coordination} where avoidance control laws are computed using value functions.  Velocity obstacles have been proposed in \cite{fiorini1998motion} for motion planning in dynamic environments. They select avoidance maneuvers outside of robot's velocity obstacles to avoid static and moving obstacles by means of a tree-search. While this method is prone to undesirable oscillations, the authors in \cite{van2008reciprocal,van2011reciprocal,wilkie2009generalized} propose reciprocal velocity obstacles  are immune to such oscillations. More recently, control barrier function based controllers have been used in \cite{borrmann2015control,wang2017safety} to mediate between safety and performance using QPs.
\subsection{Prior Work on Deadlock Resolution}

The importance of coordinating motions of multiple robots while simultaneously ensuring safety, performance and deadlock prevention has been acknowledged in works as early as in \cite{o1989deadlockfree}. Here, authors proposed scheduling algorithms to asynchronously coordinate motions of two manipulators to ensure  their trajectories remain collision and deadlock free.  In the context of mobile robots, \cite{yamaguchi1999cooperative} identified the presence of deadlocks in a cooperative  scenario using mobile robot troops. To the best of our knowledge, \cite{jager2001decentralized} were the first to propose algorithms for deadlock resolution specifically for multiple mobile robots. Their strategy for collision avoidance modifies planned paths by inserting idle times and resolves deadlocks by asking the trajectory planners of each robot to plan an alternative trajectory until deadlock is resolved. Authors in \cite{li2005motion} proposed coordination graphs to resolve deadlocks in robots navigating through narrow corridors. \cite{wang2017safety,rodriguez2016guaranteed} added perturbation terms to their controllers for avoiding deadlock. 
 \\ 
\hspace{1cm}Differently from these, we characterize analytical properties of system states when in deadlock. We explicitly analyze controls from CBF based QPs and demonstrate  intuitive explanations for systems in deadlock are indeed recovered using duality. Our analysis can be extended to reveal bottlenecks of any optimization based controller synthesis method. Additionally, we use graph enumeration to highlight the complexity of this analysis. We do not consider additive perturbations for resolving deadlocks, since there are no formal guarantees. Instead, we use feedback linearization and the geometric properties recovered from duality to guide the design of a provably correct controller  ensures safety, performance and deadlock resolution.
\section{Avoidance Control with CBFs: Review}

\label{CBFReview}
In this section, we review CBF based QPs used for synthesizing controllers  mediate between safety (collision avoidance) and performance (goal-stabilization) for multirobot systems. We refer the reader to \cite{wang2017safety} for a comprehensive treatment on this subject, since our work builds on top of their approach.  Assume  we have $N$ mobile robots, each of which follows double-integrator dynamics:

\begin{align}
\left[\begin{matrix}
\dot{\boldsymbol{p}}_i \\
\dot{\boldsymbol{v}}_i
\end{matrix}\right] &= \left[\begin{matrix}
\dot{\boldsymbol{v}}_i \\
\boldsymbol{0}
\end{matrix}\right] + \left[\begin{matrix}
\boldsymbol{\mbox{O}} \\
\mathbf{I}
\end{matrix}\right]\boldsymbol{u}_i, 
\end{align}
where $\boldsymbol{p}_i=(x_i,y_i)\in\mathbb{R}^2$ represents the position of robot $i$, $\boldsymbol{v}_i\in\mathbb{R}^2$ represents its velocity and $\boldsymbol{u}_i\in\mathbb{R}^2$ represents the acceleration (\textit{i.e.} control). The collective state of robot $i$ is denoted by $\boldsymbol{z}_i=(\boldsymbol{p}_i,\boldsymbol{v}_i)$ and the collective state of the multirobot system is denoted as $\boldsymbol{Z}=(\boldsymbol{z}_1,\boldsymbol{z}_2,\dots,\boldsymbol{z}_N)$.  Assume  each robot has maximum allowable acceleration limits $\vert\boldsymbol{u}_i\vert\leq\alpha_i$  represent actuator constraints. The problem of goal stabilization with avoidance control requires  each robot $i$ must reach a goal $\boldsymbol{p}_{d_i}$ while avoiding collisions with every other robot $j\neq i$. For reaching a goal, assume  there is a prescribed PD controller $\hat{\boldsymbol{u}}_i(\boldsymbol{z}_i)=-k_p(\boldsymbol{p}_i-\boldsymbol{p}_{d_i})-k_v\boldsymbol{v}_i$ with $k_p$,$k_v>0$. This controller is chosen as a nominal reference controller because by itself, it ensures exponential stabilization of each robot to its goal. However, there is no guarantee  the resulting trajectories will be collision free.

Based on \cite{wang2017safety}, a safety constraint is formulated for every pair of robots to ensure mutually collision free motions. This constraint is mathematically posed by defining a function  maps the joint state space of robots $i\mbox{ and }j$ to a real-valued safety index \textit{i.e.} $h:\mathbb{R}^4 \times \mathbb{R}^4\longrightarrow\mathbb{R}$. For a desired safety margin distance $D_s$, this index is defined as  
\begin{align}
\label{hdef}
h_{ij}=\sqrt{2(\alpha_i + \alpha_j)(\norm{\Delta \boldsymbol{p}_{ij}}-D_{s})} + \frac{\Delta \boldsymbol{p}^T_{ij}\Delta \boldsymbol{v}_{ij}}{\norm{\Delta\boldsymbol{p}_{ij}}}.
\end{align}
% If the robots follow single-integrator dynamics, we modify this function to $h_{ij} = \norm{\Delta \boldsymbol{p}_{ij}}^2-D^2_{s}$. 
Robots $i\mbox{ and }j$ are considered to be collision-free if their states $(\boldsymbol{z}_i,\boldsymbol{z}_j)$ are such  $h_{ij}(\boldsymbol{z}_i,\boldsymbol{z}_j)\geq0$. We define ``safe set" as the 0-level superset of $h_{ij}$ \textit{i.e.}  $\mathcal{C}_{ij}\coloneqq \{(\boldsymbol{z}_i,\boldsymbol{z}_j)\in\mathbb{R}^8\mid h_{ij}(\boldsymbol{z}_i,\boldsymbol{z}_j)\geq0$\}. The boundary of the safe set is 
\begin{align}
\label{safetysetdef}
    \partial \mathcal{C}_{ij}=\{(\boldsymbol{z}_i,\boldsymbol{z}_j) \in \mathbb{R}^4 \vert h(\boldsymbol{z}_i,\boldsymbol{z}_j)=0\}
\end{align}
Assuming  the initial positions of robots $i\mbox{ and }j$ are in the safe set $\mathcal{C}_{ij}$, we would like to synthesize controls $\boldsymbol{u}_i \mbox{ and }\boldsymbol{u}_j$  ensure  future states of the robots $i\mbox{ and }j$ also stay in $\mathcal{C}_{ij}$. This can be achieved by ensuring 
\begin{align}
\label{constraint}
\frac{dh_{ij}}{dt} \geq -\kappa(h_{ij}),
\end{align}
where we choose, $\kappa(h)\coloneqq h^3$. For the given choice of $h$, \cref{constraint} can be rewritten as 
\begin{align}
-\Delta\boldsymbol{p}^T_{ij}\Delta \boldsymbol{u}_{ij} \leq b_{ij}, \mbox{   where} 
\end{align}

\begin{align}
\label{safetyconstraint}
b_{ij} = 
&\norm{\Delta \boldsymbol{p}_{ij}}h^3_{ij} 
+ \frac{(\alpha_i + \alpha_j)\Delta \boldsymbol{p}^T_{ij}\Delta \boldsymbol{v}_{ij}}{\sqrt{2(\alpha_i + \alpha_j)(\norm{\Delta \boldsymbol{p}_{ij}}-D_{s})}}+ \norm{\Delta \boldsymbol{v}_{ij}}^2 - \frac{(\Delta \boldsymbol{p}^T_{ij}\Delta \boldsymbol{v}_{ij})^2}{\norm{\Delta \boldsymbol{p}_{ij}}^2}
\end{align}
This constraint is distributed on robots $i \mbox{ and }j$ as:

\begin{align}
\label{decentralizedconstraints}
-\Delta\boldsymbol{p}_{ij}^T\boldsymbol{u}_i \leq \frac{\alpha_i}{\alpha_i+\alpha_j}b_{ij} \mbox{     and     }
\Delta\boldsymbol{p}_{ij}^T\boldsymbol{u}_j \leq \frac{\alpha_j}{\alpha_i+\alpha_j}b_{ij}  
\end{align} \\ 
Therefore, any $\boldsymbol{u}_i$ and $\boldsymbol{u}_j$  satisfy \cref{decentralizedconstraints} will ensure collision free trajectories for robots $i\mbox{ and }j$ in the multirobot system.  Note  these constraints are linear in $\boldsymbol{u}_i$ and $\boldsymbol{u}_j$ for a given state $(\boldsymbol{z}_i,\boldsymbol{z}_j)$. Therefore, the feasible set of controls is convex. Assuming robot $i$ wants to avoid collisions with its $M$ neighbors, there will be $M$ collision avoidance constraints. To mediate between safety and goal stabilization, a QP is posed  computes a controller closest to the PD control $\hat{\boldsymbol{u}}_i(\boldsymbol{z}_i)$ (in 2-norm) and satisfies the $M$ collision avoidance constraints: 
\begin{equation}
\label{optimization_formulation_2}
	\begin{aligned}
		& \underset{\boldsymbol{u}_i}{\text{minimize}}
		& & \norm{\boldsymbol{u}_i - \hat{\boldsymbol{u}}_i(\boldsymbol{z}_i)}^2_2 \\
		& \text{subject to}
		& & -\Delta\boldsymbol{p}_{ij}^T\boldsymbol{u}_i \leq \frac{\alpha_i}{\alpha_i+\alpha_j}b_{ij}  \mbox{ } j\in \{1,\dots,M\} \\
		& & & \vert \boldsymbol{u}_i \vert \leq \boldsymbol{\alpha}_i
	\end{aligned}
\end{equation}\\
This QP has $(M+4)$ constraints ($M$ from collision avoidance with $M$ neighbors and four from acceleration limits). Each robot $i$ executes a local version of this QP and computes its optimal $\boldsymbol{u}^*_i$ at every time step. As long as the QP remains feasible, the generated control $\boldsymbol{u}^*_i$ ensures collision avoidance of robot $i$ with its neighbors. In the next section, we derive an analytical expression for $\boldsymbol{u}^*_i$ as a function of $(\boldsymbol{z}_1,\dots,\boldsymbol{z}_N)$ to analyze the closed-loop dynamics of the ego robot and use this to investigate the incidence of deadlocks resulting from this technique.

\section{Analysis of $N$ Robot Deadlock}
\label{Analysis for deadlock}

We reviewed the formulation of multirobot collision avoidance and goal stablization using the framework of CBF based QPs.  In this section, we will show  this approach can result in deadlocks (depending on the initial conditions and goals of robots).  We want to analyze qualitative properties of a robot in deadlock. Towards  end, we will investigate the KKT conditions  \cite{boyd2004convex} of the problem in \cref{optimization_formulation_2}. Our goal is to use these conditions to compute properties of geometric configurations of robots in deadlock and then exploit these properties to make the robots exit deadlock.  Fig. \ref{fig:deadlock_traj} shows the states of two robots  have fallen in deadlock while executing controllers based on \cref{optimization_formulation_2}. Notice from \cref{fig:deadlock_pos}  the positions of robots have converged, but \textbf{not} to their respective goals. Therefore, the outputs from the prescribed PD controller will still be non-zero after convergence. However, the control inputs from \cref{optimization_formulation_2} have already converged to zero \cref{fig:deadlock_acc}.  From these observations, deadlock is defined as follows  \cite{wang2017safety}
\begin{mydef}
\label{deadlock_definition}
A robot $i$ is in deadlock if $\boldsymbol{u}^*_i=0$, $\boldsymbol{v}_i=0$, $\hat{\boldsymbol{u}}_i \neq 0$ and $\boldsymbol{p}_i \neq \boldsymbol{p}_{d_i}$
\end{mydef}
In simpler terms, for a robot to be in deadlock, it should be static \textit{i.e.} its velocity should be zero,  and the output from the QP based controller should also be zero, even though the reference PD controller reports non-zero acceleration since the robot is not at its intended goal. 
We now look at the KKT conditions for the optimization problem in  \cref{optimization_formulation_2}.
\begin{figure}[t]
\setlength{\belowcaptionskip}{-12pt}
\centering     %%% not \center
\subfigure[Positions ]{\label{fig:deadlock_pos}\includegraphics[width=58mm]{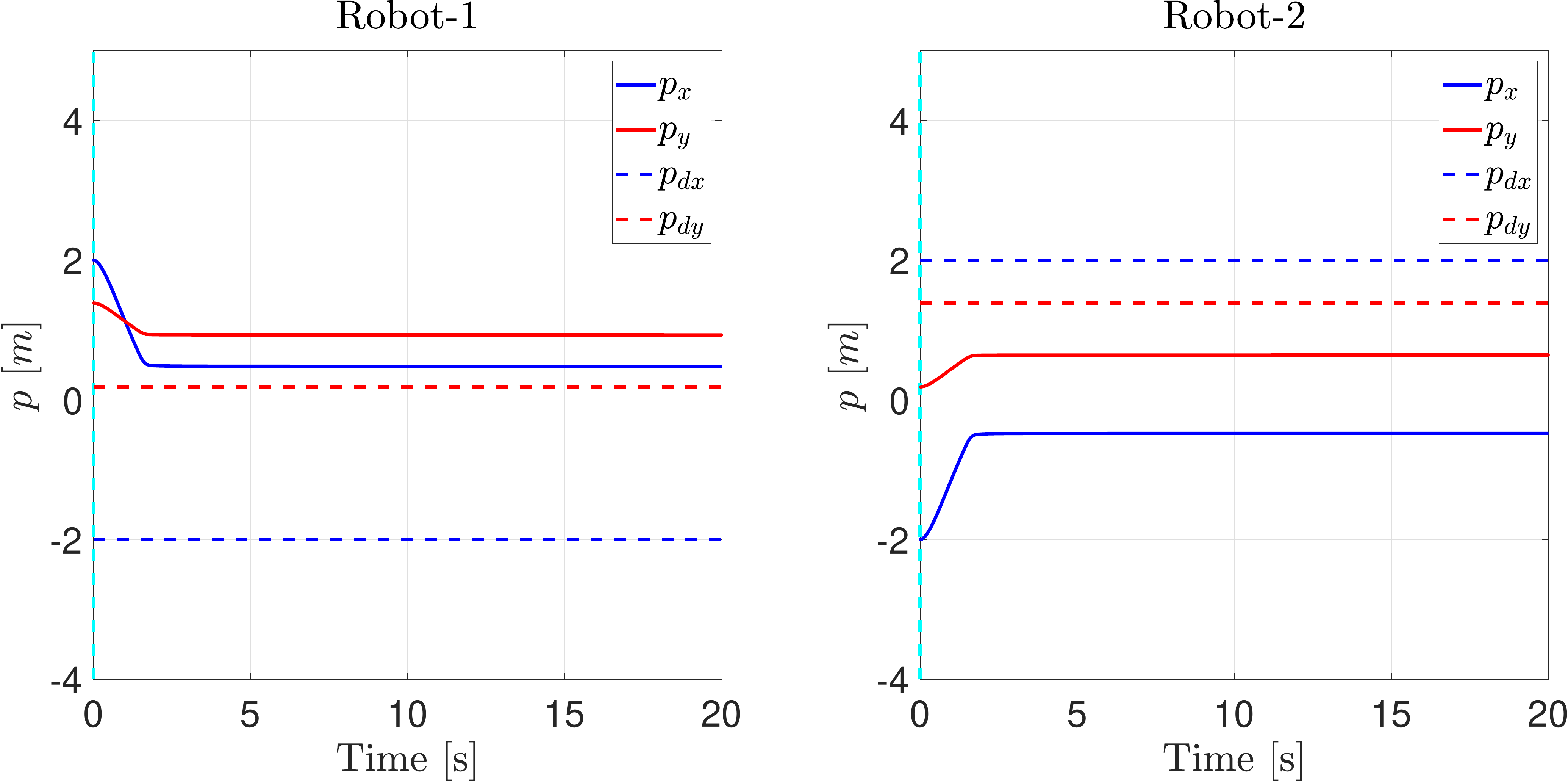}}
\subfigure[Accelerations (control inputs) ]{\label{fig:deadlock_acc}\includegraphics[width=58mm]{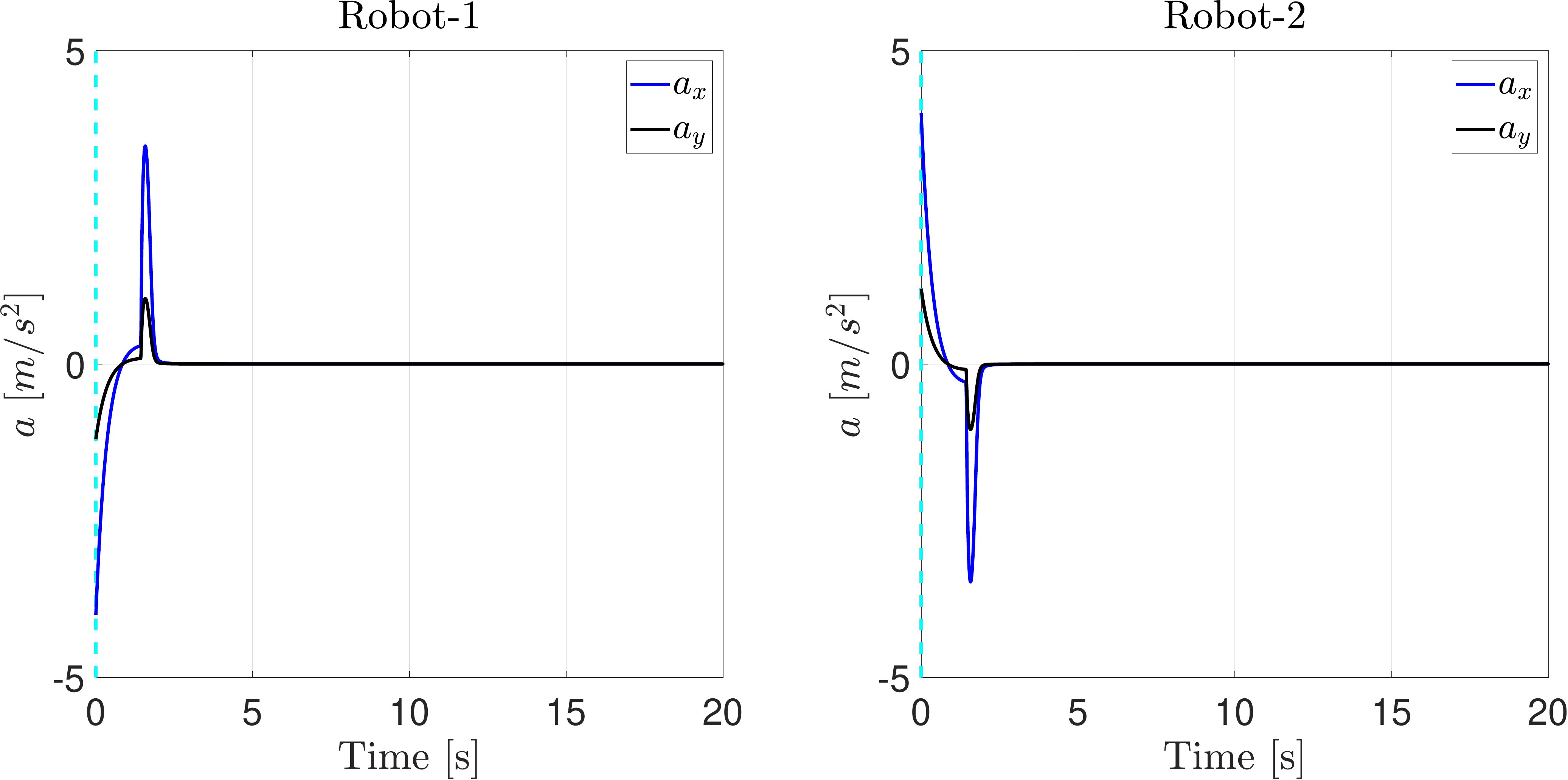}}
\caption{Positions and accelerations of robots falling in deadlock. Note  $\lim_{t\rightarrow \infty}p_{x,y}\neq p_{d_{x,y}}$ yet  $\lim_{t\rightarrow \infty}a_{x,y}=0$. Video: \url{https://tinyurl.com/y4ylzwh8}}
\label{fig:deadlock_traj}
\end{figure}

\subsection{KKT Conditions}
\label{KKT Conditions}
Recall  each robot $i$ computes a control by solving a local QP as in \cref{optimization_formulation_2}. 
Define $\boldsymbol{a}_j\coloneqq-\Delta\boldsymbol{p}_{ij}$ and $\hat{b}_j=\frac{\alpha_i}{\alpha_i+\alpha_j}b_{ij}$.We will drop subscript $i$ and implicitly assume  the QP is being solved for the ego robot. Hence, we rewrite \cref{optimization_formulation_2} as:

\begin{equation}
\label{P1}
	\begin{aligned}
		& \underset{\boldsymbol{u}}{\text{minimize}}
		& & \norm{\boldsymbol{u} - \hat{\boldsymbol{u}}}^2_2 \\
		& \text{subject to}
		& & \widetilde{A}\boldsymbol{u} \leq \widetilde{\boldsymbol{b}}
	\end{aligned}
\end{equation}
% \begin{align}
% \label{P1}
% & \underset{\boldsymbol{u}}{\text{minimize}}
% & & \norm{\boldsymbol{u} - \hat{\boldsymbol{u}}}^2_2 \nonumber \\
% & \text{subject to}
% & & \tilde{A}\boldsymbol{u} \leq \tilde{\boldsymbol{b}} 
% \end{align}
where $\tilde{A} \coloneqq (\boldsymbol{a}^T_1;\dots;\boldsymbol{a}^T_M;\boldsymbol{e}^T_1; \dots ;-\boldsymbol{e}^T_2)$ and $\tilde{\boldsymbol{b}}\coloneqq (\hat{b}_1;\dots;\hat{b}_M; \alpha;\dots; \alpha)$. % \begin{align}
% \mbox{where }&\tilde{A} =
%  \left[\begin{matrix}
% \boldsymbol{a}^T_1  \\ \vdots \\\boldsymbol{a}^T_M \\  \boldsymbol{e}^T_1 \\   \vdots \\  -\boldsymbol{e}^T_2 \\
% \end{matrix}\right] \mbox{  and   } 
% \tilde{\boldsymbol{b}} 
% = \left[\begin{matrix}
% \hat{b}_1  \\ \vdots \\\hat{b}_M \\  \alpha \\   \vdots \\  \alpha \\
% \end{matrix}\right]
% \end{align}
Let $\tilde{\boldsymbol{a}}_k$ denote the $k$'th row of $\tilde{A}$ and $\tilde{b}_k$ denote the $k$'th element of $\tilde{\boldsymbol{b}}$. The Lagrange dual function for \cref{P1} is
\begin{align}
L(\boldsymbol{u},\boldsymbol{\mu}) =  \norm{\boldsymbol{u} - \hat{\boldsymbol{u}}}^2_2  + \sum_{{k=1}}^{M+4}\mu_k(\boldsymbol{\tilde{a}}^T_k\boldsymbol{u}-\tilde{b}_k)
\end{align}
Let $(\boldsymbol{u}^*,\boldsymbol{\mu}^*)$ be the optimal primal-dual solution to \cref{P1}. The KKT conditions are
\begin{enumerate}
	\item Stationarity: $\nabla_{\boldsymbol{u}}L(\boldsymbol{u},\boldsymbol{\mu})\vert_{(\boldsymbol{u}^*,\boldsymbol{\mu}^*)} = \boldsymbol{0}$
	\begin{align}
	\label{stationarity1}
	&\implies \boldsymbol{u}^* = \hat{\boldsymbol{u}} - \frac{1}{2}\sum_{k=1}^{M+4}\mu^*_k\boldsymbol{\tilde{a}}^T_k.
	\end{align}
\item Primal Feasibility 
\begin{align}
\label{primal_feasibility1}
\tilde{\boldsymbol{a}}^T_k\boldsymbol{u}^* \leq \tilde{b}_k \mbox{  }	 \forall k \in \{1,2,\dots,M+4\}
\end{align}
\item Dual Feasibility 
\begin{align}
\label{dual_feasibility1}
{\mu^*_k} \geq 0 \mbox{  }	 \forall k \in \{1,2,\dots,M+4\}
\end{align}
\item Complementary Slackness 
\begin{align}
	\label{complimentarty slackness1}
	\mu^*_k \cdot (\boldsymbol{\tilde{a}}^T_k\boldsymbol{u}^* -\tilde{b}_k) = 0 
	 \mbox{   }\forall k \in \{1,2,\dots,M+4\}
\end{align}
\end{enumerate}
Define the set of active and inactive constraints as follows: 

\begin{align}
	\mathcal{A}(\boldsymbol{u}^*) = \{k \in \{1,2,\dots,M+4\} \mid \boldsymbol{\tilde{a}}^T_k\boldsymbol{u}^* = \tilde{b}_k \} \\
	\mathcal{IA}(\boldsymbol{u}^*) = \{k \in \{1,2,\dots,M+4\} \mid \boldsymbol{\tilde{a}}^T_k\boldsymbol{u}^* < \tilde{b}_k \} 
\end{align} \break
Using complementary slackness from \cref{complimentarty slackness1}, we deduce

\begin{align}
	\mu^*_k = 0 \mbox{ $\forall k $} \in \mathcal{IA}(\boldsymbol{u}^*)
\end{align}  
Therefore, we can restrict the sum in \cref{stationarity1} to only the set of active constraints 
\begin{align}
\label{kkt_general}
	\boldsymbol{u}^* = \hat{\boldsymbol{u}} - \frac{1}{2}\sum_{k \in \mathcal{A}(\boldsymbol{u}^*)}\mu^*_k\boldsymbol{\tilde{a}}^T_k
\end{align}

\subsection{KKT Conditions for the deadlock case}
\label{Nrobots_KKT_deadlock}
From Def. \ref{deadlock_definition}, we know  in deadlock, $\boldsymbol{u}^*=\boldsymbol{0}$, $\hat{\boldsymbol{u}}\neq 0$ and $\boldsymbol{v}=\boldsymbol{0}$. We conclude:
\begin{enumerate}
	\item In deadlock, $\boldsymbol{u}^* \neq \hat{\boldsymbol{u}}$ \textit{i.e.} the solution to the QP is not equal to the prescribed PD controller (which means  $\hat{\boldsymbol{u}}$ is infeasible in deadlock \textit{i.e. }$\boldsymbol{a}^T\hat{\boldsymbol{u}}\nleq \hat{b}	$).
	\item $\boldsymbol{u}^*=\boldsymbol{0} \implies \boldsymbol{u}^* \neq \pm \boldsymbol{\alpha}$. This implies  at least the last four constraints in $\tilde{A}\boldsymbol{u} \leq \tilde{b}$ are inactive \textit{i.e.} $\{M+1,M+2,M+3,M+4\} \in \mathcal{IA}(\boldsymbol{u}^*)$ in deadlock.
\end{enumerate}

Using these observations, we rewrite the KKT conditions for the deadlock case:
\begin{enumerate}
	\item Stationarity: $\nabla_{\boldsymbol{u}}L(\boldsymbol{u},\boldsymbol{\mu})\vert_{(\boldsymbol{0},\boldsymbol{\mu}^*)} = \boldsymbol{0}$
	\begin{align}
	\label{stationarity}
    \implies \hat{\boldsymbol{u}} = \frac{1}{2}\sum_{k\in \mathcal{A}(\boldsymbol{u}^*)}\mu^*_k \boldsymbol{\tilde{a}}^T_k
	\end{align}
	\item Primal Feasibility 
	\begin{align}
	\label{primal_feasibility}
    \tilde{b}_k \geq 0\mbox{  }	 \forall k \in \{1,2,\dots,M+4\} 
	\end{align}
	\item Dual Feasibility 
	\begin{align}
	\label{dual_feasibility}
{\mu^*_k} \geq 0 \mbox{  }	 \forall k \in \{1,2,\dots,M+4\}
	\end{align}
	\item Complementary Slackness 
	\begin{align}
	\label{complimentarty slackness}
	\mu^*_k \cdot (\tilde{\boldsymbol{a}}^T_k\boldsymbol{u}^*-\tilde{b}_k) = 0 \nonumber \\
	\implies \mu^*_k \cdot \tilde{b}_k = 0 \mbox{  $\forall$} j \in \{1,2,\dots,M+4\}
	\end{align}
\end{enumerate}
Based on these conditions, we will now motivate a set-theoretic interpretation of deadlock. Assume  the state of the ego robot is $\boldsymbol{z}=(\boldsymbol{p},\boldsymbol{v})$ and it has $M$ neighbors denoted as $\boldsymbol{Z}_{nb.}$. Define  $P \in \mathbb{R}^{2 \times 4}$ and $V \in \mathbb{R}^{2 \times 4}$ appropriately to extract the position and velocity components from $\boldsymbol{z}$ \textit{i.e.} $\boldsymbol{p}=P\boldsymbol{z}$ and $\boldsymbol{v}=V\boldsymbol{z}$. Finally, define $\mathcal{D}$ as:
\begin{align}
\label{deadlock_def1}
	\mathcal{D}(\boldsymbol{z}\mid \boldsymbol{Z}_{nb.})=\{\boldsymbol{z} \in \mathbb{R}^4 \mid {\boldsymbol{u}^*}(\boldsymbol{z}) = 0, \hat{\boldsymbol{u}}(\boldsymbol{z}) \neq 0,V\boldsymbol{z} = 0,  \mu_{k}^*(\boldsymbol{Z})> 0 \mbox{ $\forall$ } k\in \mathcal{A}(\boldsymbol{u^*})\}
\end{align}
\break The set $\mathcal{D}$ is defined as the set of all states of the ego robot which satisfy the criteria of being in deadlock. We have combined the conditions of deadlock into a set theoretic definition. Note  for each robot, its set of deadlock states depends on the states of its neighboring robots. This is because the Lagrange multipliers depend  on the states of all robots. The motivation behind stating this definition is to interpret deadlock as a bonafide set in the state space of the ego robot and derive a control strategy  makes the robot evade/exit this set. We now rewrite this definition in more easily interpretable conditions. From \cref{kkt_general} and \cref{stationarity}, note  
\begin{align}
\label{force_equate}
{\boldsymbol{u}}^*(\boldsymbol{z})=0 \iff \hat{\boldsymbol{u}}(\boldsymbol{z})=\frac{1}{2}\sum_{k\in \mathcal{A}({\boldsymbol{u}^*}(\boldsymbol{z}))}\mu^*_k \boldsymbol{\tilde{a}}_k
\end{align}
Since $\boldsymbol{\tilde{a}}_k = -\Delta \boldsymbol{p}_{ik}= -P(\boldsymbol{z}-\boldsymbol{z}_k)$, we rewrite \cref{force_equate} as:
\begin{flalign}
\label{force_equate3}
{\boldsymbol{u}}^*(\boldsymbol{z})=0 \iff \hat{\boldsymbol{u}}(\boldsymbol{z})=-\frac{1}{2}\sum_{k\in \mathcal{A}({\boldsymbol{u}^*}(\boldsymbol{z}))}\mu^*_k P(\boldsymbol{z}-\boldsymbol{z}_k)
\end{flalign}
\break We will use this condition to replace the ${\boldsymbol{u}}^*(\boldsymbol{z})=0$ criterion in the def. of $\mathcal{D}$ in (\ref{deadlock_def1}). Secondly, we  know  prescribed controller $\hat{\boldsymbol{u}}(\boldsymbol{z})$ is a PD controller. Define the goal state as $\boldsymbol{z}_d=(\boldsymbol{p}_d,\boldsymbol{0})$. Noting  $\hat{\boldsymbol{u}}(\boldsymbol{z}) \neq 0$ and $ \boldsymbol{v}= \mathbf{0}$,
\begin{align}
\label{cond2}
&\hat{\boldsymbol{u}}(\boldsymbol{z}) = -k_p(\boldsymbol{p}-\boldsymbol{p}_d) - k_v\boldsymbol{v} \neq  \mathbf{0} \iff P(\boldsymbol{z}-\boldsymbol{z}_d) \neq  \mathbf{0}
\end{align}
This criterion is restating  in deadlock the ego robot is not at its goal. The final condition is  the velocity of the ego robot is zero \textit{i.e.}
$\boldsymbol{v}=\boldsymbol{0} \iff V\boldsymbol{z}=\boldsymbol{0}$. Combining these conditions, we rewrite the definition of the deadlock from \cref{deadlock_def1} as follows:
\begin{align}
\label{deadlock_def2}
\mathcal{D}(\boldsymbol{z}\mid \boldsymbol{Z}_{nb.})=\{\boldsymbol{z} \in \mathbb{R}^4 \vert
&\hat{\boldsymbol{u}}(\boldsymbol{z})=-\frac{1}{2}\sum_{k\in \mathcal{A}({\boldsymbol{u}^*}(\boldsymbol{z}))}\mu^*_k P(\boldsymbol{z}-\boldsymbol{z}_k), P(\boldsymbol{z}-\boldsymbol{z}_d) \neq 0,  V\boldsymbol{z} = 0, \nonumber \\
& \mu_{k}^*> 0 \mbox{ $\forall$ } k\in \mathcal{A}(\boldsymbol{u^*}(\boldsymbol{z}))  \}
\end{align}
Building on the definition of one robot deadlock, we motivate \textit{system deadlock} to be the set of states where all robots are in deadlock and is defined as 
\begin{align}
\label{sysdeadlockdefN}
\mathcal{D}_{system}
=&\{(\boldsymbol{z}_1,\boldsymbol{z}_2,\cdots,\boldsymbol{z}_N)\in \mathbb{R}^{4N}\mid \boldsymbol{z}_i \in \mathcal{D}(\boldsymbol{z}_i\mid \boldsymbol{Z}^i_{nb.}) \mbox{  }\forall i \in \{1,2,\cdots,N\}) 
\end{align}
For the rest of the paper, we will focus our analysis on system deadlock. This is because the case where only a subset of robots are in deadlock can be decomposed into subproblems where a subset is in \textit{system deadlock} and the remaining robots free to move. The next section focuses on the geometric complexity analysis of \textit{system deadlock}. 
\subsection{Graph Enumeration based Complexity Analysis of Deadlock}
The Lagrange multipliers $\mu^*_k$  are in general, a nonlinear function of the state of robots $\boldsymbol{z}$. Their values depend on which constraints are active/inactive (an example calculation is shown in \cref{Lagrange Multipliers}). An active constraint will in-turn determine the set of possible geometric configurations  the robots can take when they are in deadlock (\cref{Analysis for two robots,Analysis for three robots}) and  this in turn will guide the design of our deadlock resolution algorithm (\cref{deadlock resolution}).  Therefore, we are interested in deriving all possible combinations of active/inactive constraints  the robots can assume once in deadlock. But first we derive upper and lower bounds for the number of valid configurations in \textit{system deadlock}. 

We can interpret an active collision avoidance constraint between robots $i$ and $j$ as an undirected edge between vertices $i$ and $j$ in a graph formed by $N$ labeled vertices, where each vertex represents a robot. The following property (which follows from symmetry) allows the edges to be undirected.
\begin{lemma}
    If robot $i$ and $j$ are both in deadlock and  $i's$  constraint with $j$ is active (inactive), then  $j's$  constraint with $i$ is also active (inactive).
\end{lemma}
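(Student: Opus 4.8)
The plan is to exploit the fact that, in deadlock, the optimal control of \emph{each} robot vanishes, which collapses the left-hand side of every pairwise collision-avoidance constraint to zero. Activeness of such a constraint then hinges solely on whether its right-hand side vanishes, and since the shared scalar $b_{ij}$ is symmetric in $i$ and $j$, the two constraints must switch between active and inactive together.

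First I would record the symmetry $b_{ij}=b_{ji}$. Inspecting the definition in \cref{safetyconstraint}, every term depends on the pair $(i,j)$ only through $\norm{\Delta \boldsymbol{p}_{ij}}$, the inner product $\Delta \boldsymbol{p}_{ij}^T\Delta \boldsymbol{v}_{ij}$, the sum $\alpha_i+\alpha_j$, and $h_{ij}$. Each of these is invariant under the swap $i\leftrightarrow j$, because $\Delta \boldsymbol{p}_{ji}=-\Delta \boldsymbol{p}_{ij}$ and $\Delta \boldsymbol{v}_{ji}=-\Delta \boldsymbol{v}_{ij}$ enter only through norms or through the product $\Delta \boldsymbol{p}_{ij}^T\Delta \boldsymbol{v}_{ij}$, and $h_{ij}=h_{ji}$ for the same reason. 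Hence $b_{ij}=b_{ji}$. (In deadlock, $\boldsymbol{v}_i=\boldsymbol{v}_j=\boldsymbol{0}$ forces $\Delta \boldsymbol{v}_{ij}=\boldsymbol{0}$, so $b_{ij}$ further simplifies to $\norm{\Delta \boldsymbol{p}_{ij}}h_{ij}^3$, but only the symmetry is needed.)

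Next I would write the two relevant constraints explicitly from \cref{decentralizedconstraints}: robot $i$'s constraint with $j$ is $-\Delta \boldsymbol{p}_{ij}^T\boldsymbol{u}_i\leq \frac{\alpha_i}{\alpha_i+\alpha_j}b_{ij}$, and robot $j$'s constraint with $i$ is $\Delta \boldsymbol{p}_{ij}^T\boldsymbol{u}_j\leq \frac{\alpha_j}{\alpha_i+\alpha_j}b_{ij}$. By Def. \ref{deadlock_definition}, both robots being in deadlock gives $\boldsymbol{u}_i^*=\boldsymbol{0}$ and $\boldsymbol{u}_j^*=\boldsymbol{0}$, so the left-hand side of each constraint evaluates to $0$. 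Consequently robot $i$'s constraint with $j$ is active $\iff 0=\frac{\alpha_i}{\alpha_i+\alpha_j}b_{ij}\iff b_{ij}=0$, where I use $\alpha_i,\alpha_j>0$; the identical computation for robot $j$ gives that its constraint with $i$ is active $\iff b_{ji}=0$.

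Finally, combining with the symmetry $b_{ij}=b_{ji}$ yields that robot $i$'s constraint with $j$ is active iff $b_{ij}=0$ iff robot $j$'s constraint with $i$ is active; and since primal feasibility forces $b_{ij}\geq 0$ in deadlock (\cref{primal_feasibility}), each constraint is inactive precisely when $b_{ij}>0$. This settles both the active and the inactive halves of the claim. The argument is short; the one point needing care is the essential use of the hypothesis that \emph{both} robots are in deadlock, since it is exactly what drives both left-hand sides to zero. Without it (say only robot $i$ stationary), robot $j$'s left-hand side $\Delta \boldsymbol{p}_{ij}^T\boldsymbol{u}_j$ need not vanish and the reduction to a condition on the symmetric quantity $b_{ij}$ breaks down.
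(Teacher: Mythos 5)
Your proof is correct and follows essentially the same route as the paper, which states the lemma "follows from symmetry" without elaboration: you make that symmetry explicit by noting $b_{ij}=b_{ji}$ and that, with both optimal controls vanishing in deadlock, activeness of either constraint reduces to the single symmetric condition $b_{ij}=0$. Nothing is missing; your observation about where the hypothesis that \emph{both} robots are in deadlock enters is exactly the right point of care.
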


\subsubsection{Upper Bound}
Given $N$ vertices, there are ${}^{N}C_{2}$ distinct pairs of edges possible. The overall system can have any subset of those edges. Since a set with ${}^{N}C_{2}$ members has $2^{{}^{N}C_{2}}$ subsets, we conclude  there are $2^{{}^{N}C_{2}}$ possible graphs. In other words, given $N$ robots, the number of configurations  are admissible in deadlock is $2^{{}^{N}C_{2}}$. However, this number is an upper bound because it includes cases where a given vertex can be disconnected from all other vertices, which is not valid in \textit{system deadlock} as shown next.

\subsubsection{Lower Bound}
We further impose the restriction  each vertex in the graph have at-least one edge \textit{i.e.} each robot have at-least one constraint active with some other robot. This is because if a robot has no active constraints \textit{i.e.} $\mu^*_k=0$ $\forall k$ then from \cref{force_equate}, we will get $\boldsymbol{u}^*(\boldsymbol{z}) = \hat{\boldsymbol{u}}(\boldsymbol{z}) = \boldsymbol{0}$ which would contradict the definition of deadlock for  robot and hence contradict \textit{system deadlock}. \js{From this observation, it follows  the set of graphs  are valid in \textit{system deadlock} is a superset of connected simple graphs  can be formed by $N$ labeled vertices. This is because there could be graphs  are not simply connected yet admissible in deadlock. While this argument is based on algebraic qualifiers resulting from the `edge' interpretation of collision avoidance constraints, it is possible  some simply connected graphs may not be geometrically feasible due to restrictions imposed by Euclidean geometry. Graphs  are (a) simply-connected (to enforce deadlock for each robot), (b) have $N$ labelled vertices (since each robot has an ID), (c) are embedded in $\mathbb{R}^2$ (since the robots/environment are planar), (d) have Euclidean distance between connected vertices equal to $D_s$, (e)  between unconnected vertices greater than $D_s$, and (f) have at-least one or two edges per vertex, necessarily represent admissible geometric configurations of robots in \textit{system deadlock}.
% \begin{enumerate}
%     \item (Algebraic) simply-connected (to enforce deadlock for each robot)
%     \item (Algebraic) has $N$ labelled vertices (each robot has an ID)
%     \item (Geometric) embedded in $\mathbb{R}^2$ (robots/environment are planar)
%     \item (Geometric) Euclidean distance between connected vertices equals  $D_s$
%     \item (Geometric) Euclidean distance between unconnected vertices exceeds $D_s$ 
% \end{enumerate}
The reason for qualifiers (d) and (e) is explained in the proof of \cref{touching}. (f) is needed because the decision variables in \cref{P1} are in $\mathbb{R}^2$, so there can be one or two active constraints (possibly more) per ego robot. The number of graphs meeting qualifiers (a) and (b) can be obtained using the following recurrence relation \cite{wilf2005generatingfunctionology}
\begin{align}
    d_N = 2^{{}^{N}C_{2}} - \frac{1}{N}\sum_{k=1}^{N-1}k \mbox{ }{{}^{N}C_{k}}2^{{}^{N-k}C_{2}}d_k
\end{align}
For $N=\{1,\mbox{ }2,\mbox{ }3,\mbox{ }4\}$, this number is $\{1,\mbox{ }1,\mbox{ }4,\mbox{ }38\}$. The number of graphs meeting qualifiers (a), (c) and (d) can be obtained by calculating the number of connected matchstick graphs on $N$ nodes \cite{Weisstein}. The number of graphs meeting  (b) and (d) was obtained in \cite{alon2014two} and is exponential in $N^2$ (for unit distance graphs). A lower bound for graphs satisfying all qualifiers (a)-(f) can be shown to be $0.5(N+1)(N-1)!$ as follows ($N\geq 3$). Consider a cyclic graph whose each node is the vertex of an $N$ regular polygon with side $D_s$. Such a graph necessarily satisfies (a)-(f). Re-arrangements of its vertices gives rise to $0.5(N-1)!$ graphs. Likewise, a graph with nodes along an open chain also satisfies (a)-(f), and gives $0.5N!$ rearrangements. Thus, the total is $0.5(N-1)!+0.5N!=0.5(N+1)(N-1)!$ It is well known  factorial overtakes exponential, thus highlighting the increase in the number of geometric configurations. Our MATLAB simulations show  the exact number of configurations for $N=\{1,\mbox{ }2,\mbox{ }3,\mbox{ }4\}$ are $\{1,\mbox{ }1,\mbox{ }4,\mbox{ }18\}$ whereas our bound gives $\{1,\mbox{ }1,\mbox{ }4,\mbox{ }15\}$.
% is $N!\hspace{0.1cm} c^N$ where $c$ is a constant and $N!$ comes from permuting labels \cite{complexity}. Exact calculation is currently under investigation. Our MATLAB simulations show  the exact number of geometric configurations admissible in deadlock for $N=\{1,\mbox{ }2,\mbox{ }3,\mbox{ }4\}$ are $\{1,\mbox{ }1,\mbox{ }4,\mbox{ }40\}$ respectively (which is close to the number meeting qualifiers (a) and (b) \textit{i.e.} $\{1,\mbox{ }1,\mbox{ }4,\mbox{ }38\}$). (\textit{See supplementary for all valid deadlocks for four robots}). 
This simulation demonstrates the explosion in the number of possible geometric configurations  are admissible in \textit{system deadlock} with increasing number of robots. Therefore for further analysis, we will restrict to the case of two and three robots. 
}
% Therefore, a lower bound on the number of possible geometric configurations in deadlock can be obtained by calculating the number of connected simple graphs using $N$ labeled vertices. For a given $N$, this lower bound $d_N$ satisfies the following recurrence relation (see \cite{wilf2005generatingfunctionology} (3.10.2))
% \begin{align}
%     d_N = 2^{{}^{N}C_{2}} - \frac{1}{N}\sum_{k=1}^{N-1}k \mbox{ }{{}^{N}C_{k}}2^{{}^{N-k}C_{2}}d_k
% \end{align}
% For $N=\{1,\mbox{ }2,\mbox{ }3,\mbox{ }4,\mbox{ }5\}$, the lower bounds are calculated to be $\{1,\mbox{ }1,\mbox{ }4,\mbox{ }38,\mbox{ }728\}$ respectively whereas the actual number of cases valid in deadlock are $\{1,\mbox{ }1,\mbox{ }4,\mbox{ }41,\mbox{ }768\}$ respectively. This lower bound demonstrates the explosion in the number of possible geometric configurations  are admissible in \textit{system deadlock} with increasing number of robots. Therefore for further analysis, we will restrict to the case of two and three robots.  (\textit{See supplementary material for all valid deadlocks for four robots}).

\section{Two-Robot Deadlock}

\label{Analysis for two robots}
In \cref{Analysis for deadlock}, we proposed a set-theoretic definition of deadlock for a specific robot in an $N$ robot system. In this section, we will refine the KKT conditions derived in \cref{Nrobots_KKT_deadlock} for the case of two robots in the system. This setting reveals several important underlying characteristics of the system  are extendable to the $N$ robot case, as will be shown for $N=3$. One key feature of a two-robot system is  a single robot by itself cannot be in deadlock \textit{i.e.} either both robots are in deadlock  or neither. This is because the sole collision avoidance constraint is symmetric due to Lemma 1. Hence, a two-robot system can only exhibit \textit{system deadlock}. Additionally since the ego robot avoids collision only with the one other robot, there is no sum in \cref{force_equate} \textit{i.e.}
\begin{align}
\label{force_equate2robot}
{\boldsymbol{u}}^*(\boldsymbol{z})=0 \iff \hat{\boldsymbol{u}}(\boldsymbol{z})=\frac{1}{2}\mu^*\boldsymbol{a}
\end{align}
The left hand side of this equation is $\hat{\boldsymbol{u}}(\boldsymbol{z}) = -k_p(\boldsymbol{p}_{ego}-\boldsymbol{p}_d)$. The right hand side is $\frac{1}{2}\mu^* \boldsymbol{a} = -\frac{1}{2}\mu^*(\boldsymbol{p}_{ego}-\boldsymbol{p}_{neighbor})$. Writing this another way, we have $-k_p(\boldsymbol{p}_{ego}-\boldsymbol{p}_d) + \frac{1}{2}\mu^*(\boldsymbol{p}_{ego}-\boldsymbol{p}_{neighbor})=\boldsymbol{0}$. The first term as represents an attractive force  pulls the ego robot towards goal $\boldsymbol{p}_d$. Since $\mu^*>0$,  the second term represents a repulsive force pushing the ego robot away from its neighbor. Thus, \textit{system deadlock} occurs when the net force due to attraction and repulsion on each robot vanishes (see Fig. \ref{fig:D4.png}).
We now define the \textit{system deadlock} set $\mathcal{D}_{system}$ using \cref{deadlock_def2,sysdeadlockdefN}:
\begin{align}
\label{sysdeadlockdef}
\mathcal{D}_{system}
=&\{(\boldsymbol{z}_1,\boldsymbol{z}_2)\in \mathbb{R}^8\mid \boldsymbol{\hat{u}}_1=\frac{1}{2}\mu^*_1\boldsymbol{a}_1,\mbox{ }\boldsymbol{\hat{u}}_2=\frac{1}{2}\mu^*_2\boldsymbol{a}_2,\mbox{ }\mu^*_1> 0 ,\mbox{ } \mu^*_2> 0,\mbox{ } \nonumber \\ &(P(\boldsymbol{z}_1-\boldsymbol{z}_{d_1}),P(\boldsymbol{z}_{2}-\boldsymbol{z}_{d_2}))\neq (\boldsymbol{0},\boldsymbol{0}),\mbox{ } (V\boldsymbol{z}_1,V\boldsymbol{z}_2)=(\boldsymbol{0},\boldsymbol{0})\}.
\end{align}
where $\boldsymbol{a}_1 =-\boldsymbol{a}_2 = -(\boldsymbol{p}_1-\boldsymbol{p}_2)$. Next, we derive analytical expressions for the Lagrange multipliers $\mu^*_1, \mu^*_2$. Depending on whether the collision avoidance constraint is active/inactive at the optimum, there are two cases: \\ 
\textbf{Case 1}: The constraint $\boldsymbol{a}^T\boldsymbol{u} \leq \hat{b} $ is active at $\boldsymbol{u}=\boldsymbol{u}^*$ \textit{i.e.} $\boldsymbol{a}^T\boldsymbol{u}^*= \hat{b}$
	\begin{align}
	\label{Lagrange Multipliers}
		&\implies \boldsymbol{a}^T\bigg(\hat{\boldsymbol{u}} - \frac{1}{2}\mu^*\boldsymbol{a}\bigg)= \hat{b} \nonumber \\
		&\implies\mu^* = 2\frac{\boldsymbol{a}^T\hat{\boldsymbol{u}} - \hat{b}}{\norm{ \boldsymbol{a}}^2_2}
	\end{align}
\textbf{Case 2}: The constraint $\boldsymbol{a}^T\boldsymbol{u} \leq \hat{b} $ is inactive at $\boldsymbol{u}=\boldsymbol{u}^*$ \textit{i.e.} $\boldsymbol{a}^T\boldsymbol{u}^* < \hat{b} $. From complementary slackness, it follows  $\mu^* = 0$ and hence $\boldsymbol{u}^* = \hat{\boldsymbol{u}}$. However, this contradicts the definition of deadlock. Hence, case 2 can never arise in deadlock. 
\subsection{Characteristics of two-robot deadlock}
We now analyze qualitative properties of the system deadlock set towards synthesizing a controller  will enable the robots to exit this set. We will show  when deadlock occurs, (1) the two robots are separated by the safety distance, (2) deadlock set is non-empty and (3) bounded and of measure zero.
\begin{theorem}[Safety Margin Apart]
\label{touching}
In deadlock, the two robots are separated by the safety distance and the robots are on the verge of violating safety (see Fig. \ref{fig:deadlock_cartoon}, \ref{fig:D4.png})
\end{theorem}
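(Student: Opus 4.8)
The plan is to exploit the fact that in deadlock the optimal control is $\boldsymbol{u}^*=\boldsymbol{0}$ while the lone collision-avoidance constraint is \emph{active}, and to show that activeness together with vanishing velocities forces the safety index $h_{12}$ to be zero, which is precisely the statement $\norm{\Delta\boldsymbol{p}_{12}}=D_{s}$.

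First I would argue the constraint is active. By membership in $\mathcal{D}_{system}$ (see \cref{sysdeadlockdef}) we have $\mu^*>0$ strictly. As noted above in the Case 2 discussion of the multiplier, an inactive constraint forces $\mu^*=0$ and hence $\boldsymbol{u}^*=\hat{\boldsymbol{u}}$, contradicting the definition of deadlock; so the constraint must be active, i.e. $\boldsymbol{a}^T\boldsymbol{u}^*=\hat{b}$. Substituting $\boldsymbol{u}^*=\boldsymbol{0}$ gives $\hat{b}=\boldsymbol{a}^T\boldsymbol{0}=0$. Equivalently, this is complementary slackness \cref{complimentarty slackness} read with $\mu^*>0$.

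Next I would evaluate $\hat{b}=\tfrac{\alpha_i}{\alpha_i+\alpha_j}b_{ij}$ at a deadlock state. Since $\boldsymbol{v}_1=\boldsymbol{v}_2=\boldsymbol{0}$ we have $\Delta\boldsymbol{v}_{12}=\boldsymbol{0}$, so every term of \cref{hdef} and \cref{safetyconstraint} carrying a factor of $\Delta\boldsymbol{v}_{12}$ vanishes, leaving $h_{12}=\sqrt{2(\alpha_i+\alpha_j)(\norm{\Delta\boldsymbol{p}_{12}}-D_{s})}$ and $b_{12}=\norm{\Delta\boldsymbol{p}_{12}}\,h_{12}^3$. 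Hence $\hat{b}=\tfrac{\alpha_i}{\alpha_i+\alpha_j}\norm{\Delta\boldsymbol{p}_{12}}\,h_{12}^3$. Setting $\hat{b}=0$ and using $\tfrac{\alpha_i}{\alpha_i+\alpha_j}>0$ together with $\norm{\Delta\boldsymbol{p}_{12}}>0$ (the two robots occupy distinct points, otherwise $h_{12}$ is undefined), I conclude $h_{12}^3=0$, i.e. $h_{12}=0$. By the displayed form of $h_{12}$ this is equivalent to $\norm{\Delta\boldsymbol{p}_{12}}=D_{s}$, so the state lies on $\partial\mathcal{C}_{12}$ and the robots are exactly the safety distance apart, on the verge of violating safety.

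The calculations are routine; the step deserving the most care is establishing that the constraint is genuinely active rather than merely feasible, since this is what converts $\boldsymbol{u}^*=\boldsymbol{0}$ into the identity $\hat{b}=0$. This is exactly why the strict inequality $\mu^*>0$ is built into the definition of $\mathcal{D}_{system}$, and it is also what rules out the alternative $\norm{\Delta\boldsymbol{p}_{12}}>D_{s}$ (which would give $h_{12}>0$, hence $\hat{b}>0$, contradicting activeness). A final bookkeeping check is that $\norm{\Delta\boldsymbol{p}_{12}}\neq 0$, so that the factorization $b_{12}=\norm{\Delta\boldsymbol{p}_{12}}\,h_{12}^3$ pins down the physical root $h_{12}=0$ rather than the spurious $\norm{\Delta\boldsymbol{p}_{12}}=0$. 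This argument also retroactively justifies qualifiers (d) and (e) of the graph-enumeration discussion: connected (active) pairs sit at distance exactly $D_{s}$, whereas unconnected (inactive) pairs have $h>0$, hence positive slack $\hat{b}>0$, and are therefore strictly farther than $D_{s}$.
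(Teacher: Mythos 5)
Your proposal is correct and follows essentially the same route as the paper's own proof: rule out the inactive case via complementary slackness ($\mu^*>0$ forces the constraint active), substitute $\boldsymbol{u}^*=\boldsymbol{0}$ to get $\hat{b}=0$, use $\Delta\boldsymbol{v}_{12}=\boldsymbol{0}$ to reduce $\hat{b}$ to $\tfrac{\alpha_1}{\alpha_1+\alpha_2}\norm{\Delta\boldsymbol{p}_{12}}h_{12}^3$, and discard the degenerate root $\norm{\Delta\boldsymbol{p}_{12}}=0$ to conclude $h_{12}=0$, i.e. $\norm{\Delta\boldsymbol{p}_{12}}=D_s$ and $\mathcal{D}_{system}\subset\partial\mathcal{C}$. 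The only additions beyond the paper's argument are expository (the explicit link to qualifiers (d) and (e) of the graph enumeration), not a different method.
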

\begin{proof}
In \cref{Lagrange Multipliers}, we proved  the collision avoidance constraint is active in deadlock. Since both robots are in deadlock, we know  both of their collision avoidance constraints are active \textit{i.e.} $
\boldsymbol{a}^T_1\boldsymbol{u}_1^*= \hat{b}_{12} \mbox{  ,  } \boldsymbol{a}^T_2\boldsymbol{u}_2^*= \hat{b}_{21} \nonumber 
$ and $\boldsymbol{u}_1^*=\boldsymbol{0}$ and  $\boldsymbol{u}_2^*=\boldsymbol{0}$. This implies $\hat{b}_{12}=\hat{b}_{21}=0$. Using \cref{safetyconstraint,decentralizedconstraints} and  in deadlock, $(\boldsymbol{v}_1,\boldsymbol{v}_2)=(\boldsymbol{0},\boldsymbol{0})$ we get 
\begin{align}
\label{actuallyneededinboundary}
\hat{b}_{12}=\frac{\alpha_1}{\alpha_1+\alpha_2}\norm{\Delta \boldsymbol{p}_{12}}h^3_{12}=0 \implies h_{12}=0 
\end{align}
Recall $h_{12}$ from \cref{hdef} and using  $(\boldsymbol{v}_1,\boldsymbol{v}_2)=(\boldsymbol{0},\boldsymbol{0})$, we get 
\begin{align}
\label{neededinboundary}
h_{12}(\boldsymbol{z}_1,\boldsymbol{z}_2) = \sqrt{2(\alpha_1 + \alpha_2)(\norm{\Delta \boldsymbol{p}_{12}}-D_{s})} 
\end{align} 
Therefore, $h_{12}=0 \iff  \norm{\Delta \boldsymbol{p}_{12}}=D_{s}$. Assuming QP is feasible, we disregard $\norm{\Delta \boldsymbol{p}_{12}}=0$. Therefore, $\norm{\Delta \boldsymbol{p}_{12}}=D_{s}$. Additionally, recalling the definition from $\partial \mathcal{C}$ from \cref{safetysetdef} we deduce , in deadlock,  $(\boldsymbol{z}_1,\boldsymbol{z}_2) \in \partial \mathcal{C}$ \textit{i.e.} $\mathcal{D}_{system} \subset \partial \mathcal{C}$. 
\end{proof}
% \subsection{\mbox{$\mathcal{D}_{system}$ is located on the boundary of the safe set}}
This result confirms our intuition, because if the robots are separated by more than the safety distance, then they will have wiggle room to move because they are not at their goals and $\hat{\boldsymbol{u}}\neq \boldsymbol{0}$. However, the ability to move, albeit with small velocity would contradict the definition of deadlock. We now propose a family of states  are always in the system deadlock set $\mathcal{D}_{system}$.
\begin{theorem}[$\mathcal{D}_{system}$ is Non-Empty] 
\label{nonempty}
$\forall\mbox{ } k_p,k_v,D_s>0,\exists$ a family of states $(\boldsymbol{z}^*_1,\boldsymbol{z}^*_2) \in \mathcal{D}_{system}$. These states are such  the robots and their goals are all collinear.
\end{theorem}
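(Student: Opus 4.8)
The plan is to establish non-emptiness constructively, by exhibiting an explicit family of states lying in $\mathcal{D}_{system}$. The starting observation is that in deadlock $\boldsymbol{v}_1=\boldsymbol{v}_2=\boldsymbol{0}$, so the prescribed controller collapses to $\hat{\boldsymbol{u}}_i=-k_p(\boldsymbol{p}_i-\boldsymbol{p}_{d_i})$. Substituting this into the stationarity conditions $\hat{\boldsymbol{u}}_i=\tfrac{1}{2}\mu^*_i\boldsymbol{a}_i$ from \cref{sysdeadlockdef}, and using $\boldsymbol{a}_1=-\boldsymbol{a}_2=-(\boldsymbol{p}_1-\boldsymbol{p}_2)$, yields for each robot $k_p(\boldsymbol{p}_i-\boldsymbol{p}_{d_i})=\tfrac{1}{2}\mu^*_i(\boldsymbol{p}_i-\boldsymbol{p}_j)$. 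Since $k_p>0$ and I will arrange $\mu^*_i>0$, this forces $\boldsymbol{p}_i-\boldsymbol{p}_{d_i}$ to be a positive multiple of $\boldsymbol{p}_i-\boldsymbol{p}_j$. Both relations involve the single line through $\boldsymbol{p}_1$ and $\boldsymbol{p}_2$, so all four points $\boldsymbol{p}_1,\boldsymbol{p}_2,\boldsymbol{p}_{d_1},\boldsymbol{p}_{d_2}$ must be collinear. This both explains the collinearity assertion of the statement and tells me exactly where to search for admissible states.

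Concretely, I would place the robots on the $x$-axis a safety distance apart, $\boldsymbol{p}_1=(0,0)$ and $\boldsymbol{p}_2=(D_s,0)$, where the separation $\norm{\boldsymbol{p}_1-\boldsymbol{p}_2}=D_s$ is the one mandated by \cref{touching}; choose goals lying beyond the opposing robot, $\boldsymbol{p}_{d_1}=(d_1,0)$ with $d_1>D_s$ and $\boldsymbol{p}_{d_2}=(-d_2,0)$ with $d_2>0$; and set both velocities to zero. This is the head-on configuration. With these choices the vector $\hat{\boldsymbol{u}}_1=k_p(\boldsymbol{p}_{d_1}-\boldsymbol{p}_1)$ points in the same direction as $\boldsymbol{a}_1=\boldsymbol{p}_2-\boldsymbol{p}_1$, and likewise for robot $2$, so the stationarity equations are solved by strictly positive multipliers (explicitly $\mu^*_1=2k_pd_1/D_s$ and $\mu^*_2=2k_p(D_s+d_2)/D_s$). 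Since $\boldsymbol{p}_i\neq\boldsymbol{p}_{d_i}$, the robots are not at their goals, so the remaining defining conditions of \cref{sysdeadlockdef} are met. Because the line and its basepoint may be chosen freely and $d_1,d_2$ range over an interval, this is genuinely a family, parametrized by a rigid-motion orbit together with the two goal distances.

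It remains to confirm that $\boldsymbol{u}^*_i=\boldsymbol{0}$ is truly the optimizer of each local QP \cref{P1}, not merely a stationary point. As \cref{P1} is a convex QP, the KKT conditions are sufficient for global optimality, so I only need to verify them at $\boldsymbol{u}_i=\boldsymbol{0}$ with the multipliers above. The acceleration bound is slack ($|\boldsymbol{0}|\le\boldsymbol{\alpha}_i$); the collision constraint and complementary slackness both hinge on \cref{touching}, which gives $\hat{b}_i=0$ at this configuration, so that $\boldsymbol{a}_i^T\boldsymbol{0}=0\le 0=\hat{b}_i$ holds with equality (the constraint is active) and $\mu^*_i\hat{b}_i=0$; dual feasibility follows from $\mu^*_i>0$; and stationarity holds by the choice of $\mu^*_i$. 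Hence $\boldsymbol{u}^*_i=\boldsymbol{0}$ solves each QP and the constructed family lies in $\mathcal{D}_{system}$, proving it non-empty.

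The step I expect to be the crux is not any single computation but the consistency bookkeeping around $\hat{b}_i$: I must ensure that the right-hand side of the collision constraint, as produced by \cref{safetyconstraint} at zero velocity and separation $D_s$, is exactly zero, so that the constraint is \emph{active} rather than strictly satisfied. If it were strictly satisfied, complementary slackness would force $\mu^*_i=0$, which by Case $2$ of \cref{Lagrange Multipliers} rules out deadlock. Establishing $\hat{b}_i=0$ is precisely the content imported from \cref{touching}, so once that theorem is in hand the argument is self-contained.
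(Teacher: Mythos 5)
Your construction is correct and, at its core, it follows the same route as the paper's own proof: exhibit the collinear, head-on configuration with the robots exactly $D_s$ apart and zero velocity, compute the multipliers explicitly from the active-constraint formula of \cref{Lagrange Multipliers}, and check positivity, zero velocity, and not-at-goal against \cref{sysdeadlockdef}. Two things you do are sharper than the paper's write-up. First, you derive collinearity as a \emph{necessary} condition of two-robot deadlock from stationarity (each $\boldsymbol{p}_i-\boldsymbol{p}_{d_i}$ is forced to be a positive multiple of $\boldsymbol{p}_i-\boldsymbol{p}_j$), whereas the paper only constructs collinear states; this necessity direction is actually what the proof of \cref{finalthm} later invokes when it asserts that in deadlock the robots and goals are collinear. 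Second, you verify that $\boldsymbol{u}^*_i=\boldsymbol{0}$ really is the optimizer of each QP (convexity plus a full KKT check, including the verification that $\hat{b}_i=0$ so the collision constraint is active rather than slack); the paper checks the conditions listed in \cref{sysdeadlockdef} but leaves this $\hat{b}_i=0$ bookkeeping implicit in its use of the Case 1 multiplier formula.

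The one genuine discrepancy is the quantification over the goals. In the paper the goals $\boldsymbol{p}_{d_1},\boldsymbol{p}_{d_2}$ are ambient problem data: its proof takes them as \emph{arbitrary} and places the robots relative to them ($\boldsymbol{p}^*_1=\alpha\boldsymbol{p}_{d_1}+(1-\alpha)\boldsymbol{p}_{d_2}$, $\boldsymbol{p}^*_2=\boldsymbol{p}^*_1-D_s\hat{e}_{\beta}$, $\alpha\in(0,1)$), so $\mathcal{D}_{system}$ is shown to be non-empty for \emph{every} goal pair. You instead fix the robot positions and then choose the goals, so as written you only show that some goal configurations admit deadlock; moreover, your restriction $d_1>D_s$ means your family only realizes goal pairs with separation $D_G=d_1+d_2>D_s$, and says nothing when the given goals satisfy $D_G\leq D_s$. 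This is easily repaired with your own machinery: the force balance and positivity of $\mu^*_1$ need only $d_1>0$ (goal $1$ on the robot-$2$ side of robot $1$), not $d_1>D_s$. So, given arbitrary goals, rotate coordinates onto the goal line, place robot $1$ strictly between the goals, and place robot $2$ at distance $D_s$ from it toward $\boldsymbol{p}_{d_1}$; all of your multiplier computations then go through verbatim and your argument recovers the paper's full statement.
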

\begin{proof}
To prove this theorem, we propose a set of candidate states $(\boldsymbol{z}^*_1,\boldsymbol{z}^*_2)$ and show  they satisfy the definition of deadlock \cref{sysdeadlockdef}. See Fig. \ref{fig:D4.png} for an illustration of geometric quantities referred to in this proof.

Let $\boldsymbol{p}^*_1=\alpha \boldsymbol{p}_{d_1} + (1-\alpha) \boldsymbol{p}_{d_2}$ and $\boldsymbol{p}^*_2= \boldsymbol{p}^*_1 - D_s\hat{e}_{\beta}$ where $\beta = \mbox{tan}^{-1}(\frac{y_{d_2}-y_{d_1}}{x_{d_2}-x_{d_1}})$ and $\alpha \in (0,1)$. Note  $\boldsymbol{p}^*_1,\boldsymbol{p}^*_2, \boldsymbol{p}_{d_1}, \boldsymbol{p}_{d_2}$ are collinear by construction. Let $\boldsymbol{z}^*_1=(\boldsymbol{p}^*_1,\boldsymbol{0})$ and $\boldsymbol{z}^*_2=(\boldsymbol{p}^*_2,\boldsymbol{0})$. Then we will show  $\boldsymbol{Z^*}=(\boldsymbol{z}^*_1,\boldsymbol{z}^*_2) \in \mathcal{D}_{system}$. 
Note 
\begin{align}
\centering
\label{cont}
\boldsymbol{a}_1&=-(\boldsymbol{p}^*_1-\boldsymbol{p}^*_2) =-D_{s}\hat{e}_{\beta} \nonumber \\
\hat{\boldsymbol{u}}_1&=-k_p(\boldsymbol{p}^*_1-\boldsymbol{p}_{d_1}) 
\end{align}
From definition, $\hat{e}_{\beta} 
    =\frac{1}{D_G}(x_{d_2}-x_{d_1},y_{d_2}-y_{d_1})$ where $D_G = \norm{\boldsymbol{p}_{d_2}-\boldsymbol{p}_{d_1}}$ is the distance between the goals. Therefore, we have 
\begin{align}
\label{etheta}
\boldsymbol{p}^*_1-\boldsymbol{p}_{d_1} &= 
% \alpha \boldsymbol{p}_{d_1} + (1-\alpha) \boldsymbol{p}_{d_2} - \boldsymbol{p}_{d_1} \nonumber \\
-(1-\alpha) \boldsymbol{p}_{d_1} + (1-\alpha) \boldsymbol{p}_{d_2} \nonumber \\
% &= (1-\alpha)(x_{d_2}-x_{d_1},y_{d_2}-y_{d_1}) \nonumber \\
% &= (1-\alpha)D_{G}\bigg(\frac{x_{d_2}-x_{d_1}}{D_G},\frac{y_{d_2}-y_{d_1}}{D_G}\bigg) \nonumber \\
&= (1-\alpha)D_{G}\hat{e}_{\beta}
\end{align}
Substituting \cref{etheta} in \cref{cont} gives 
\begin{align}
\label{lhs}
\hat{\boldsymbol{u}}_1=-k_p(1-\alpha)D_{G}\hat{e}_{\beta}
\end{align}
From \cref{cont} and \cref{lhs}, we deduce  Lagrange multiplier $\mu_1 $
\begin{align}
\label{lmult}
 \mu_1 &= 2\frac{\boldsymbol{a}^T_1\hat{\boldsymbol{u}}_1}{\norm{ \boldsymbol{a}_1}^2_2} = 2k_p(1-\alpha)\frac{D_G}{D_s} >0\mbox{  }\forall \alpha \in (0,1) \nonumber \\
\implies \frac{1}{2}\mu_1 \boldsymbol{a}_1 &= -\frac{1}{2}2k_p(1-\alpha)\frac{D_G}{D_s} D_{s}\hat{e}_{\beta}  =\hat{\boldsymbol{u}}_1
\end{align}
Hence, in \cref{lmult}, we have shown  $\hat{\boldsymbol{u}}_1=\frac{1}{2}\mu_1 \boldsymbol{a}_1$ which is one condition in the definition of the deadlock set. Similarly, we can show  $\hat{\boldsymbol{u}}_2=\frac{1}{2}\mu_2 \boldsymbol{a}_2$.  Also note  in \cref{lmult} we have shown  the Lagrange multiplier $\mu_1$ is positive, which is another condition in \cref{sysdeadlockdef}. We can similarly show  $\mu_2>0$. Finally, note  for our choice of states, $\boldsymbol{v}^*_1=\boldsymbol{v}^*_2=\boldsymbol{0}$ and we have restricted $\alpha \in (0,1)$ so we can ensure  $\boldsymbol{p}^*_i\neq\boldsymbol{p}_{d_i}$ Hence, the proposed states are always in deadlock. 
\end{proof}
\begin{theorem}[$\mathcal{D}_{system}$ is bounded]
The \textit{system deadlock} set is bounded and measure zero.
\end{theorem}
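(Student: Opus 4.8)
The plan is to establish the two assertions separately: measure zero follows almost immediately from the velocity condition baked into $\mathcal{D}_{system}$, whereas boundedness is the substantive part and will rely on the force-balance equations of \cref{sysdeadlockdef} together with the conclusion $\norm{\Delta \boldsymbol{p}_{12}} = D_s$ from \cref{touching}.

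For measure zero, I would observe that every state in $\mathcal{D}_{system}$ satisfies $(V\boldsymbol{z}_1, V\boldsymbol{z}_2) = (\boldsymbol{0},\boldsymbol{0})$, i.e. $\boldsymbol{v}_1 = \boldsymbol{v}_2 = \boldsymbol{0}$. Hence
\begin{align}
\mathcal{D}_{system} \subseteq \{(\boldsymbol{z}_1,\boldsymbol{z}_2)\in\mathbb{R}^8 \mid \boldsymbol{v}_1 = \boldsymbol{v}_2 = \boldsymbol{0}\},
\end{align}
which is a $4$-dimensional linear subspace of $\mathbb{R}^8$ and therefore has eight-dimensional Lebesgue measure zero. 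Since any subset of a measure-zero set is measure zero, so is $\mathcal{D}_{system}$. As a stronger by-product of the boundedness argument below, the positions are in fact confined to a one-dimensional segment, so the set is even thinner than this crude bound suggests.

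For boundedness, write $\boldsymbol{d} \coloneqq \boldsymbol{p}_1 - \boldsymbol{p}_2 = \Delta\boldsymbol{p}_{12}$ and $\boldsymbol{g} \coloneqq \boldsymbol{p}_{d_1} - \boldsymbol{p}_{d_2}$, so that $\norm{\boldsymbol{g}} = D_G$ and, by \cref{touching}, $\norm{\boldsymbol{d}} = D_s$. Using $\boldsymbol{v}_i = \boldsymbol{0}$, the two force-balance conditions $\hat{\boldsymbol{u}}_i = \tfrac{1}{2}\mu_i^*\boldsymbol{a}_i$ with $\boldsymbol{a}_1 = -\boldsymbol{d}$ and $\boldsymbol{a}_2 = \boldsymbol{d}$ become
\begin{align}
k_p(\boldsymbol{p}_1 - \boldsymbol{p}_{d_1}) = \tfrac{1}{2}\mu_1^*\boldsymbol{d}, \qquad k_p(\boldsymbol{p}_2 - \boldsymbol{p}_{d_2}) = -\tfrac{1}{2}\mu_2^*\boldsymbol{d}.
\end{align}
Subtracting these eliminates the absolute positions and yields $\big(k_p - \tfrac{1}{2}(\mu_1^* + \mu_2^*)\big)\boldsymbol{d} = k_p\boldsymbol{g}$. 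Taking norms and using $\norm{\boldsymbol{d}} = D_s$ and $\norm{\boldsymbol{g}} = D_G$ gives $\big|k_p - \tfrac{1}{2}(\mu_1^* + \mu_2^*)\big| D_s = k_p D_G$, which pins the multiplier sum to the finite value $\mu_1^* + \mu_2^* = 2k_p\big(1 \mp D_G/D_s\big)$.

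Since $\mu_1^*, \mu_2^* > 0$, each multiplier is bounded above by this fixed sum, so $\tfrac{\mu_i^*}{2k_p}$ is bounded; feeding this back into the displayed equations bounds $\norm{\boldsymbol{p}_i - \boldsymbol{p}_{d_i}}$, and since the goals $\boldsymbol{p}_{d_i}$ are fixed points, both $\boldsymbol{p}_1$ and $\boldsymbol{p}_2$ lie in a bounded region; together with $\boldsymbol{v}_1 = \boldsymbol{v}_2 = \boldsymbol{0}$ this shows $\mathcal{D}_{system}$ is bounded. I expect the main obstacle to be precisely this step of bounding the (a priori state-dependent and unbounded-looking) Lagrange multipliers: the key trick is that subtracting the two force-balance equations cancels the absolute positions and collapses everything to a single scalar relation fixed by the geometric data $D_s$ and $D_G$. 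I would finally handle the degenerate case $\boldsymbol{g} = \boldsymbol{0}$ (coincident goals, where the coefficient of $\boldsymbol{d}$ vanishes and $\boldsymbol{d}$ may point in any direction) separately, noting that $\mu_1^* + \mu_2^* = 2k_p$ is still forced and the positions again stay within $D_s$ of the common goal, so boundedness persists.
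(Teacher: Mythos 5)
Your proposal is correct, and it is actually more careful than the paper's own proof, which is a two-line sketch: the paper asserts the exact identity $\norm{\boldsymbol{p}_1-\boldsymbol{p}_{d_1}}+\norm{\boldsymbol{p}_2-\boldsymbol{p}_{d_2}}=D_s+D_G$, says it ``can be verified by straightforward substitution'' from the configurations of \cref{touching} and \cref{nonempty}, and declares boundedness and measure zero evident. You instead derive the constraint forward from the KKT force balance valid on \emph{all} of $\mathcal{D}_{system}$: subtracting the two stationarity equations eliminates the absolute positions, pins the multiplier sum to $\mu_1^*+\mu_2^*=2k_p(1\mp D_G/D_s)$, and converts the bounded multipliers into $\norm{\boldsymbol{p}_i-\boldsymbol{p}_{d_i}}\le D_s+D_G$. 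This buys two things. First, your measure-zero argument (the set lies in the zero-velocity subspace, a four-dimensional subspace of $\mathbb{R}^8$) is airtight and independent of any position geometry, whereas the paper infers it from the position constraint. Second, your $\mp$ branch exposes states the paper's identity silently excludes: when $0<D_G<D_s$ and the goals are not ``crossed'' (e.g. $\boldsymbol{p}_1=(D_s,0)$, $\boldsymbol{p}_2=(0,0)$, $\boldsymbol{p}_{d_1}=(3D_s/4,0)$, $\boldsymbol{p}_{d_2}=(D_s/4,0)$ with $\mu_1^*=\mu_2^*=k_p/2$), every condition of \cref{sysdeadlockdef} holds but the distance sum equals $D_s-D_G$, not $D_s+D_G$; so the paper's equality is really only one branch, while your inequality is what actually holds on the whole set and is all that boundedness requires. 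Your separate handling of $\boldsymbol{g}=\boldsymbol{0}$ (forcing $\mu_1^*+\mu_2^*=2k_p$ and positions within $D_s$ of the common goal) is likewise correct.
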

\begin{proof}
Following the definition of $\boldsymbol{p}^*_{1}$ and $\boldsymbol{p}^*_{2}$  from \cref{touching} and \cref{nonempty}, we can show  when two robots are in deadlock, their positions satisfy
\begin{align}
\norm{\big(\boldsymbol{p}_{1} -\boldsymbol{p}_{d_1}\big)} +  \norm{\big(\boldsymbol{p}_{2} -\boldsymbol{p}_{d_2}\big)} = D_s+ D_G \nonumber
\end{align}
This can be verified by straightforward substitution.  From this constraint it is evident,  the deadlock set is not ``large", bounded and of measure zero.  is why, random perturbations are one feasible way to resolve deadlock. 
\end{proof}

 \begin{figure}[t]
\centering      
\setlength{\belowcaptionskip}{-14pt}
\subfigure[Two Robot Equilibrium]{\label{fig:D4.png}\includegraphics[trim={0.0cm 15.8cm 11.94cm 2.7cm},clip,width=.35\columnwidth]{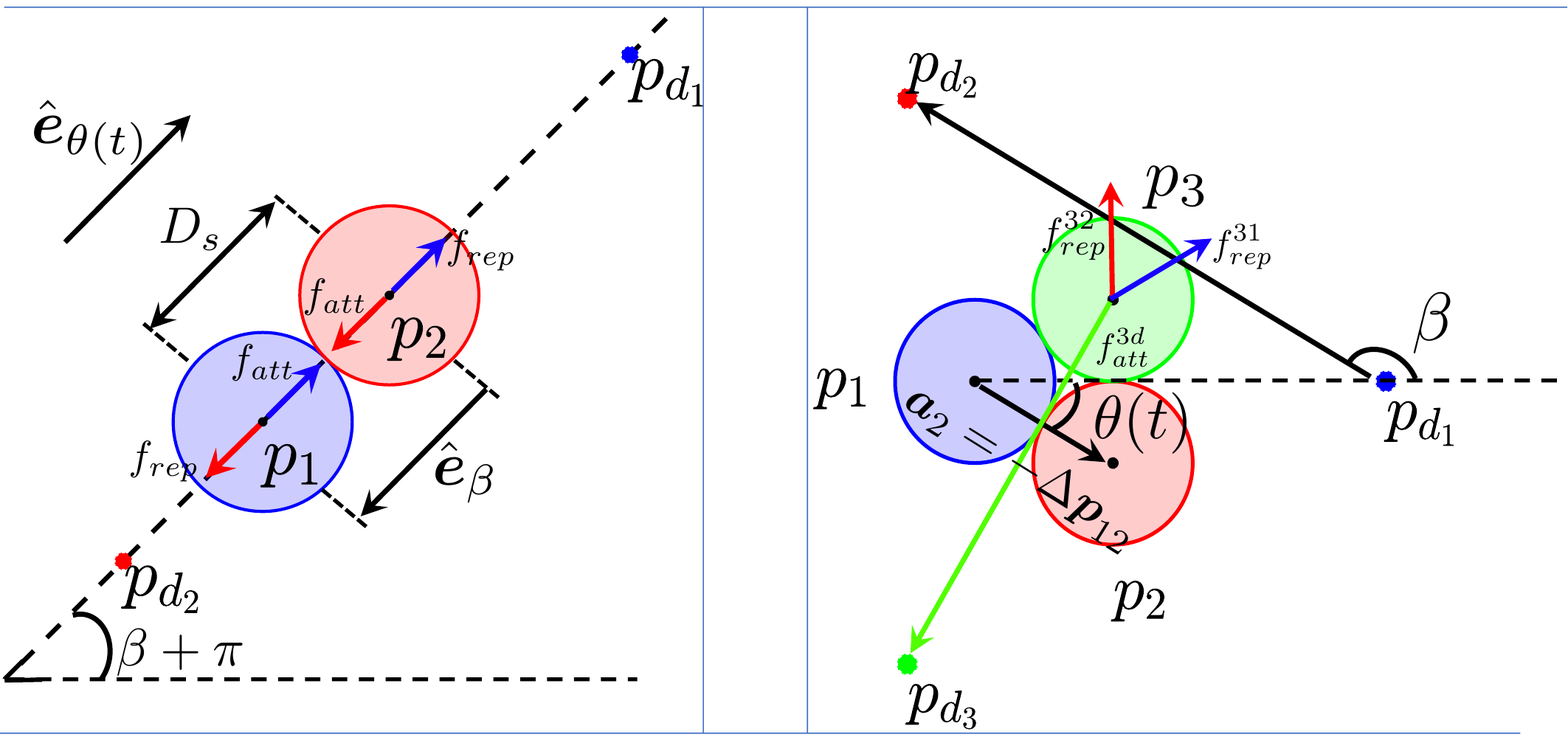}}
\subfigure[Three Robot Equilibrium]{\label{fig:D4.jpeg}\includegraphics[trim={11.2cm 15.4cm 0.0cm 2.7cm},clip,width=.35\columnwidth]{A.pdf}}
\caption{Force Equilibrium in Deadlock}
\label{equib._deadlock_two_three_robots}
\end{figure}
\section{Three Robot Deadlock}

\label{Analysis for three robots}
Following the ideas developed for two robot deadlock, we now describe the three robot case. We will demonstrate  properties such as robots being on the verge of safety violation (\cref{touching3}) and non-emptiness (\cref{nonempty3}) are retained in this case as well. We are interested in analyzing \textit{system deadlock}, which occurs when $\boldsymbol{u}^*_{i} =  \boldsymbol{0}$, $\boldsymbol{v}_{i} =  \boldsymbol{0}$ and $\hat{\boldsymbol{u}}_i \neq \boldsymbol{0}$ $\forall i \in \{1,2,3\}$. Since we are studying \textit{system deadlock}, each robot will have at-least one active collision avoidance constraint  (each robot has two constraints in total). Note  the \textit{system deadlock} set $\mathcal{D}_{system}$ for three robots is defined analogously to \cref{sysdeadlockdef}.
\begin{theorem}[Safety Margin Apart]
\label{touching3}
In system deadlock, either all three robots are separated by the safety margin or exactly two pairs of robots are separated by the safety margin.
\end{theorem}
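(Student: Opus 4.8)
\section*{Proof proposal}

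The plan is to reduce the claim to a short graph-enumeration argument on three labelled vertices, reusing the per-pair machinery of \cref{touching}. First I would invoke Lemma 1 to treat each active collision-avoidance constraint as an undirected edge between the two robots it couples, so that the active/inactive pattern of the system is encoded by a simple graph $G$ on the vertex set $\{1,2,3\}$. The key observation, borrowed directly from the computation in \cref{touching}, is that the argument there is entirely local to a single pair: whenever the constraint between robots $i$ and $j$ is active (i.e. $\boldsymbol{a}^T\boldsymbol{u}_i^*=\hat b_{ij}$) we have, using $\boldsymbol{u}_i^*=\boldsymbol{0}$ together with $\boldsymbol{v}_i=\boldsymbol{v}_j=\boldsymbol{0}$, that $\hat b_{ij}=0$, hence $h_{ij}=0$ and therefore $\norm{\Delta\boldsymbol{p}_{ij}}=D_s$. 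Conversely, if the $(i,j)$ constraint is inactive then complementary slackness gives $\mu^*_{ij}=0$ and $\hat b_{ij}>0$, which forces $h_{ij}>0$ and $\norm{\Delta\boldsymbol{p}_{ij}}>D_s$. Thus an edge of $G$ is present exactly when the corresponding pair is at distance $D_s$, so that counting the pairs separated by the safety margin is the same as counting the edges of $G$.

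Next I would pin down the degree constraint that system deadlock imposes on $G$. For each robot the stationarity condition \cref{force_equate} reads $\hat{\boldsymbol{u}}_i=\frac{1}{2}\sum_{k}\mu^*_{ik}\boldsymbol{a}_{ik}$, and deadlock requires $\hat{\boldsymbol{u}}_i\neq\boldsymbol{0}$. If robot $i$ had no active constraint, every $\mu^*_{ik}$ would vanish and the right-hand side would be $\boldsymbol{0}$, contradicting $\hat{\boldsymbol{u}}_i\neq\boldsymbol{0}$; hence each robot has at least one active constraint carrying $\mu^*>0$, i.e. every vertex of $G$ has degree at least one, and in particular every robot sits at distance exactly $D_s$ from at least one other robot.

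It then remains to enumerate the simple graphs on three labelled vertices in which every vertex has degree at least one. A graph with zero or one edge necessarily leaves a vertex isolated, which is excluded by the degree bound (and would contradict that robot's deadlock), so $G$ has either two or three edges. With two edges on three vertices the two edges must share a common vertex, giving a path whose endpoints have degree one and whose middle vertex has degree two; this is precisely the configuration in which exactly two pairs are at distance $D_s$. With three edges $G$ is the triangle $K_3$, in which all three pairs are at distance $D_s$. These are the only possibilities, which is the claimed dichotomy. The part demanding the most care is the correspondence established in the first step: I must make sure that ``separated by the safety margin'' is identified with boundary-active constraints (distance $=D_s$) rather than with $\mu^*>0$, so that the degenerate situation of an active constraint carrying a zero multiplier is correctly absorbed into the ``all three'' case and does not spuriously create a fourth configuration.
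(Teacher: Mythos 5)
Your proposal is correct and follows essentially the same route as the paper's own (much terser) proof: identify an active constraint with a pair at distance $D_s$ via the computation of \cref{touching}, note that deadlock forces every robot to have at least one active constraint, and enumerate the resulting patterns, which are exactly the paper's Category A (triangle, all three pairs at $D_s$) and Category B (path, exactly two pairs at $D_s$). Your version merely makes explicit the graph-theoretic bookkeeping and the minimum-degree argument that the paper leaves implicit, so no substantive difference remains.
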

\begin{proof}
The proof is kept brief because it is similar to the proof of \cref{touching}. Based on the number of constraints  are allowed to be active per robot, all geometric configurations can be clubbed in two categories :  \\
\textbf{Category A -}  This arises when all collision avoidance constraints of each robot are active \textit{i.e.} $\boldsymbol{a}^T_{ij}\boldsymbol{u}^*_i=\hat{b}_{ij}=0 \iff \norm{\Delta \boldsymbol{p}_{ij}}=D_s \mbox{  }\forall j\in \{1,2,3\}\backslash i$ $\forall i \in \{1,2,3\} $. As a result, each robot is separated by $D_s$ from every other robot (\cref{fig:equilateral}).  
\\
\textbf{Category B -} This arises when there is exactly one robot with both its constraints active (robot $i$ in \cref{fig:nonequilateral}), and the remaining two robots ($j$ and $k$) have exactly one constraint active each. Hence, robot $i$ is separated by $D_s$ from the other two. After relabeling of indices, category B results in three rearrangements. 
\end{proof}

\begin{theorem}[Non-emptiness] 
\label{nonempty3}
$\forall\mbox{ } k_p,k_v,D_s,R$ $>0$ and $\boldsymbol{p}_{d_i}=R\hat{\boldsymbol{e}}_{2\pi (i-1)/3}$ where $i=\{1,2,3\}$, $\exists$ $(\boldsymbol{z}^*_1,\boldsymbol{z}^*_2,\boldsymbol{z}^*_3) \in  \mathcal{D}_{system}$ where $\boldsymbol{z}^*_i=(\boldsymbol{p}^*_i,\boldsymbol{0})$ and $\boldsymbol{p}^*_i$ is proposed as follows:  \\
Category A: $\boldsymbol{p}^*_i=\frac{D_s}{\sqrt{3}}\hat{\boldsymbol{e}}_{\frac{2\pi (i-1)}{3} + \pi}  $ where $i=\{1,2,3\}$ \\
Category B: $\boldsymbol{p}^*_1 = D_s\hat{\boldsymbol{e}}_{\pi}$,
    $\boldsymbol{p}^*_2=\boldsymbol{0}$,   $\boldsymbol{p}^*_3=D_s\hat{\boldsymbol{e}}_{\frac{\pi}{3}}$ if robot $2$ has both constraints active.
%     \item  $\boldsymbol{p}^*_1 = \frac{-1}{2\sin{(\alpha-\theta)}} \begin{bmatrix}
%     2D_s\cos{\theta}\sin{(\alpha-\theta)}+2R\cos{\theta}\sin{(\alpha-\frac{\pi}{3})+2R\cos{\alpha}\sin{\theta}}\\
%     \sin{\theta}\big(3R\sin{\alpha}+2D_s\sin{(\alpha-\theta)}-\sqrt{3}R\cos{\alpha}\big)
%   \end{bmatrix}$
%     $\boldsymbol{p}^*_2=\boldsymbol{p}^*_1+D_s\hat{\boldsymbol{e}}_{\theta}$,   $\boldsymbol{p}^*_3=\boldsymbol{p}^*_2+D_s\hat{\boldsymbol{e}}_{\alpha}$, then $(\boldsymbol{z}^*_1,\boldsymbol{z}^*_2,\boldsymbol{z}^*_3) \in \mathcal{D}_{system}$ where $\boldsymbol{z}^*_i=(\boldsymbol{p}^*_i,\boldsymbol{0})$
\end{theorem}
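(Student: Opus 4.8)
The plan is to mirror the two-robot argument in \cref{nonempty}: substitute the proposed states directly into the three-robot analogue of the definition \cref{sysdeadlockdef} of $\mathcal{D}_{system}$ and verify, for each robot, that (i) its velocity vanishes, (ii) it is not at its goal, (iii) the force-balance/stationarity identity $\hat{\boldsymbol{u}}_i = \frac{1}{2}\sum_{k\in\mathcal{A}}\mu^*_k\boldsymbol{a}_k$ holds, and (iv) the associated multipliers are strictly positive. Conditions (i)--(ii) are immediate: all $\boldsymbol{v}^*_i=\boldsymbol{0}$ by construction, and in both categories $\boldsymbol{p}^*_i$ and $\boldsymbol{p}_{d_i}$ clearly differ, so $P(\boldsymbol{z}^*_i-\boldsymbol{z}_{d_i})\neq\boldsymbol{0}$. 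The content therefore reduces to solving the stationarity equations for the multipliers and proving positivity, which splits along the two categories of \cref{touching3}.

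For \textbf{Category A}, I would exploit the $2\pi/3$ rotational symmetry. The proposed positions $\boldsymbol{p}^*_i=\frac{D_s}{\sqrt{3}}\hat{\boldsymbol{e}}_{2\pi(i-1)/3+\pi}$ are the vertices of an equilateral triangle of side $D_s$ centered at the origin and antipodal to the goal triangle, so every robot has both constraints active (consistent with Category A) and its two constraint vectors $\boldsymbol{a}_{ij},\boldsymbol{a}_{ik}$ are linearly independent. Because the centroid is at the origin, $\boldsymbol{a}_{ij}+\boldsymbol{a}_{ik}=-3\boldsymbol{p}^*_i$, which is radial and hence parallel to $\hat{\boldsymbol{u}}_i=-k_p(\boldsymbol{p}^*_i-\boldsymbol{p}_{d_i})$. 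The two constraint vectors are mirror images across this radial axis, so the symmetric choice $\mu_{ij}=\mu_{ik}=\mu$ cancels the perpendicular components and matches the radial magnitudes for a suitable $\mu>0$; since $\boldsymbol{a}_{ij},\boldsymbol{a}_{ik}$ are independent this is the unique multiplier solution, and the same value works for all three robots by symmetry.

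For \textbf{Category B}, robot $2$ sits at the origin with both constraints active while robots $1$ and $3$ each retain a single active constraint with robot $2$; a short computation confirms $\norm{\boldsymbol{p}^*_1-\boldsymbol{p}^*_3}=\sqrt{3}D_s>D_s$, so the $1$--$3$ constraint is genuinely inactive (and its $\tilde{b}_{13}>0$, keeping $\boldsymbol{u}^*=\boldsymbol{0}$ primal-feasible). For robots $1$ and $3$ the single-constraint stationarity $\hat{\boldsymbol{u}}_i=\frac{1}{2}\mu_i\boldsymbol{a}_i$ is solvable with $\mu_i>0$ precisely because each such robot, robot $2$, and that robot's goal are collinear for the prescribed goals $\boldsymbol{p}_{d_i}=R\hat{\boldsymbol{e}}_{2\pi(i-1)/3}$, so $\hat{\boldsymbol{u}}_i\parallel\boldsymbol{a}_i$ with the correct sign; this collinearity is the geometric crux and must be checked explicitly. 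For robot $2$ I would solve the $2\times 2$ linear system obtained by projecting $\hat{\boldsymbol{u}}_2=\frac{1}{2}(\mu_{21}\boldsymbol{a}_{21}+\mu_{23}\boldsymbol{a}_{23})$ onto its two ($2\pi/3$-separated) constraint directions and read off $\mu_{21},\mu_{23}>0$.

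I expect the main obstacle to be Category B rather than the symmetric Category A: establishing that the prescribed goal placement makes $\hat{\boldsymbol{u}}_1,\hat{\boldsymbol{u}}_3$ collinear with their respective single constraints, and that the $2\times 2$ system for robot $2$ admits a strictly positive solution. Unlike Category A there is no global symmetry to lean on, so positivity of every multiplier has to fall out of the explicit linear algebra together with the specific goal geometry; verifying that the inactive $1$--$3$ constraint stays feasible is then the remaining routine check.
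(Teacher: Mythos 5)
Your proposal is correct and takes essentially the same route the paper intends: the paper's own proof of this theorem is simply "similar to the proof of \cref{nonempty}, so it is skipped," i.e.\ substitute the candidate states into the deadlock-set definition, solve the stationarity equations for the Lagrange multipliers, and verify positivity along with the zero-velocity and not-at-goal conditions. Your write-up fills in exactly those skipped computations (the symmetric multiplier choice in Category A, the collinearity of robots $1$ and $3$ with robot $2$ and their goals, the $2\times 2$ system for robot $2$, and the check that the $1$--$3$ constraint remains inactive with $\tilde{b}_{13}>0$), and all of them check out.
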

 \begin{figure}[t]
	\centering      
	\subfigure[Category A ]{\label{fig:equilateral}\includegraphics[width=0.240\linewidth]{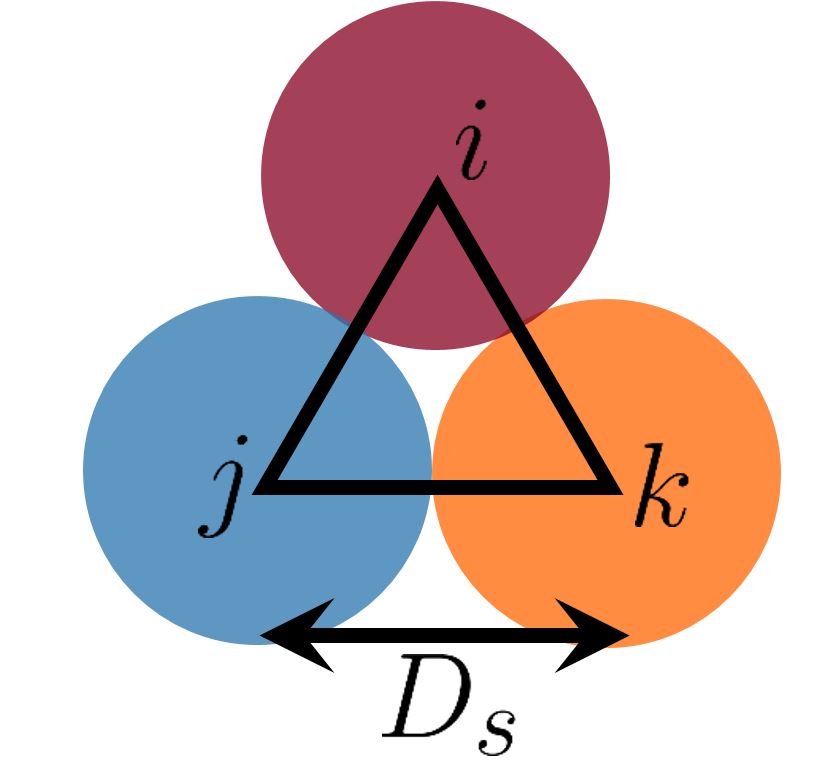}}
	\subfigure[Category B]{\label{fig:nonequilateral}\includegraphics[width=0.240\linewidth]{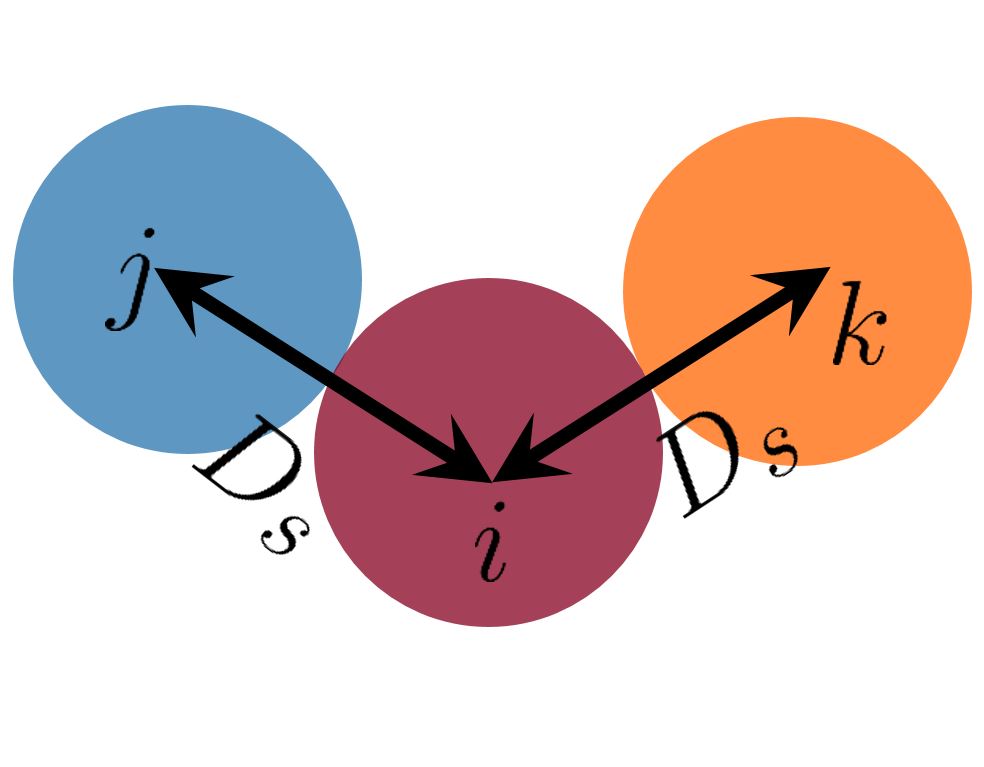}}
	\caption{Geometric configurations in system deadlock of three robots}
	\label{deadlock_three_robots}
\end{figure}
\begin{proof}
This proof is similar to the proof of \cref{nonempty} so it is skipped. Some remarks:
\begin{enumerate}
    \item In the statement of this theorem, we have predefined the desired goal positions unlike the statement of \cref{nonempty}. The candidate positions of the robots  we propose are in $\mathcal{D}_{system}$ are valid with respect to these given goals. We have derived a similar non-emptiness result for arbitrary goals but are not including it here for the sake of brevity.
    \item For category B, we proposed one set of positions  is valid in deadlock, however there is continuous family of positions  can be valid in category B. The representation of this family can be found in the appendix in \cref{nonempty32}. 
\end{enumerate}
\end{proof}
\section{Deadlock Resolution}
\label{deadlock resolution}
We now use the properties of geometric configurations derived in \cref{Analysis for two robots} and \cref{Analysis for three robots}  to synthesize a strategy  (1) gets the robots out of deadlock, (2) ensures their safety and (3) makes them converge to their goals.  One approach to achieve these objectives is to detect the incidence of deadlock while the  CBF-QP controller is running on the robots and once detected, any small non-zero perturbation to the control will instantaneously give a non-zero velocity to the robots. Thereafter, CBF-QPs can take charge again and we can hope  using this controller the system state will come out of deadlock at-least for a short time. This has two limitations however; firstly, since it was the CBF-QP controller  led to deadlock, there is no guarantee  the system will not fall back in deadlock again. Secondly, perturbations can violate safety and even lead to degraded performance. Therefore, we propose a controller which ensures  goal stabilization, safety and deadlock resolution are met with guarantees. We demonstrate this algorithm for the two and three robot cases. Extension to $N\geq 4$ is left for future work since $N\geq 4$ admits a large number of  geometric configurations  are valid in deadlock. Refer to  Fig. \ref{deadlock_res_two_three_robots} for a schematic of our approach. This algorithm is described here:
% \begin{figure}[t]
% \centering      
% \includegraphics[trim={0.0cm 24.3cm 0.0cm 0cm},clip,height=2cm,width=\columnwidth]{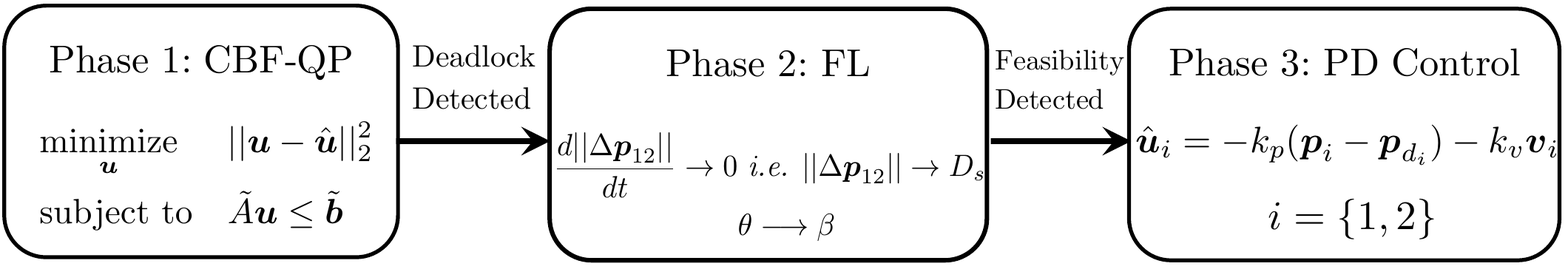}
%  \setlength{\belowcaptionskip}{-20pt}
% \caption{Deadlock Resolution Algorithm Schematic}
% \label{deadlock_res_two_three_robots}
% \end{figure}
\begin{figure}[t]
\centering      
\subfigure[Deadlock resolution for two robots]{\label{fig:drestwo}\includegraphics[trim={0.0cm 24.2cm 0.0cm 0cm},clip,height=2cm,width=\columnwidth]{Doc1.pdf}}
\subfigure[Deadlock resolution for three robots]{\label{fig:dresthree}\includegraphics[trim={0.0cm 24.2cm 0.0cm 0cm},clip,height=2cm,width=\columnwidth]{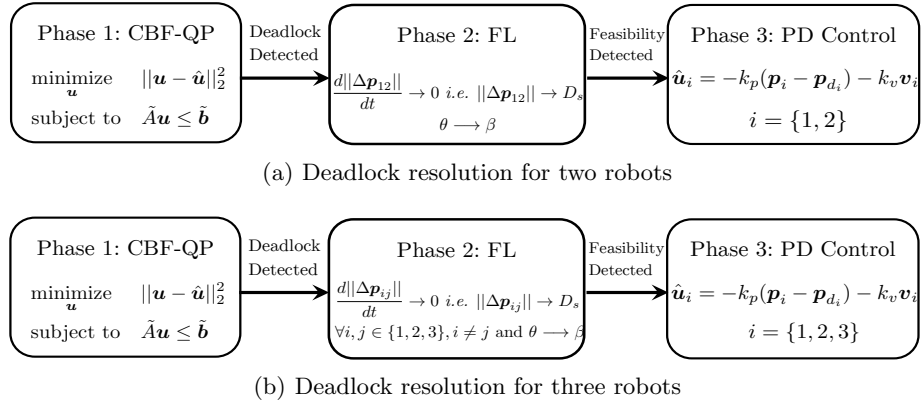}}
\caption{Deadlock Resolution Algorithm Schematic}
\label{deadlock_res_two_three_robots}
\end{figure}
\begin{enumerate}
    \item The algorithm starts by executing controls derived from CBF-QP in Phase 1. This ensures movement of robots to the goals and safety by construction. To detect the incidence of deadlock, we continuously compare $\norm{\boldsymbol{u}^*},\norm{\boldsymbol{v}}, \norm{\boldsymbol{p}-\boldsymbol{p}_{d}}$ against small thresholds. If satisfied, we switch control to phase 2, otherwise, phase 1 continues to operate.
    \item In this phase, we rotate the robots around each other to swap positions while maintaining the safe distance. 
    \begin{enumerate}
        \item For the two-robot case, we calculate $\boldsymbol{u}^1_{fl}(t)$ and $\boldsymbol{u}^2_{fl}(t)$ using feedback linearization (\textit{see  appendix material}) to ensure  $\norm{\Delta \boldsymbol{p}_{12}}=D_s$ and rotation ($\ddot{\theta}=-k_p(\theta-\beta)-k_v\dot{\theta} \implies \Delta \boldsymbol{p}^T_{12}\Delta \boldsymbol{v}_{12}=0$) (See Fig. \ref{fig:D4.jpeg} for $\theta,\beta$). This rotation and distance invariance guarantees safety \textit{i.e.} $\ h_{12}=0$. Adding an extra constraint $\boldsymbol{u}^1_{fl}+\boldsymbol{u}^2_{fl}=\boldsymbol{0}$ still ensures  the problem is well posed and additionally makes the centroid static. 
        \item For the three-robot case, we calculate $\boldsymbol{u}^1_{fl}(t),\boldsymbol{u}^3_{fl}(t),\boldsymbol{u}^3_{fl}(t)$ to ensure  $\norm{\Delta \boldsymbol{p}_{12}}=\norm{\Delta \boldsymbol{p}_{23}}=\norm{\Delta \boldsymbol{p}_{31}}=D_s$ and rotation ($\ddot{\theta}=-k_p(\theta-\beta)-k_v\dot{\theta} \implies \Delta \boldsymbol{p}^T_{12}\Delta \boldsymbol{v}_{12}=0$) (See Fig. \ref{fig:D4.jpeg} for $\theta,\beta$). This guarantees safety \textit{i.e.} $ h_{12}=h_{23}=h_{31}=0$. Similarly, we impose $\boldsymbol{u}^1_{fl}+\boldsymbol{u}^2_{fl}+\boldsymbol{u}^3_{fl}=\boldsymbol{0}$ to make the centroid static.
    \end{enumerate}
    \item Once the robots swap their positions, their new positions will ensure  prescribed PD controllers will be feasible in the future. Thus, after convergence of Phase 2 (which happens in finite time), control switches to Phase 3, which simply uses the prescribed PD controllers. This phase guarantees  the distance between robots is non-decreasing and safety is maintained as we prove in \cref{finalthm}.
\end{enumerate}
Fig. \ref{fig:simexpres} shows simulation and experimental results from running this strategy on two (\ref{fig:cc}) and three (Fig. \ref{fig:dd}) robots. Experiments were conducted using Khepera 4 nonholonomic robots (\ref{fig:ee}).  Note  for nonholnomic robots, we noticed from experiments and simulations  deadlock only occurs if the body frames of both robots are perfectly aligned with one another at $t=0$. Since this alignment is difficult to establish in experiments, we simulated a virtual deadlock at $t=0$ \textit{i.e.} assumed  the initial position of robots are ones  are in deadlock.

We next prove  this strategy ensures resolution of deadlock and convergence of robots to their goals. The proof of this theorem will exploit the geometric properties of deadlock we derived in \cref{touching} and \cref{nonempty}. We prove this theorem for $N=2$ since the proof for $N=3$ is a trivial extension of $N=2$.
\begin{theorem}
\label{finalthm}
Assuming  PD controllers are overdamped and $D_G>D_s$, this strategy ensures  (1) the robots will never fall in deadlock and (2) converge to their goals.
\end{theorem}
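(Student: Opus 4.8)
The plan is to prove the theorem for $N=2$ by tracking the closed-loop system through the three phases of the algorithm and showing that safety is preserved throughout while deadlock is escaped permanently. First I would record a structural fact that collapses the analysis to one dimension: by the force-balance characterization \cref{force_equate2robot}, in deadlock $-k_p(\boldsymbol{p}_i-\boldsymbol{p}_{d_i})=\tfrac{1}{2}\mu^*_i\boldsymbol{a}_i$ with $\boldsymbol{a}_i=\mp(\boldsymbol{p}_1-\boldsymbol{p}_2)$, so each $\boldsymbol{p}_i-\boldsymbol{p}_{d_i}$ is parallel to $\boldsymbol{p}_1-\boldsymbol{p}_2$; hence the two robots and both goals are collinear (consistent with \cref{nonempty}), and by \cref{touching} they sit exactly $D_s$ apart on $\partial\mathcal{C}$.

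Next I would dispatch Phases 1 and 2. Phase 1 is safe by construction of the CBF-QP \cref{optimization_formulation_2}, and the switch to Phase 2 is triggered precisely when the thresholds detect the conditions of \cref{deadlock_definition}. In Phase 2 the feedback-linearizing controller holds $\norm{\Delta\boldsymbol{p}_{12}}=D_s$, so $h_{12}=0$ and safety is maintained, while the overdamped angular law $\ddot{\theta}=-k_p(\theta-\beta)-k_v\dot{\theta}$ drives $\theta\to\beta$ and $\dot{\theta}\to0$ without overshoot. Since the centroid is held fixed and the separation is constant, the robots come to rest in a swapped, collinear configuration aligned with the goal line. The consequence I would extract is that the \emph{signed} gap along the goal line now has the same sign as $D_G$: the robot whose goal lies on the positive side ends up on the positive side. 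This is exactly the effect of the swap and is what distinguishes the post-Phase-2 state from the deadlock state.

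The heart of the proof is Phase 3. Collinearity is preserved by the PD law (the forces $-k_p(\boldsymbol{p}_i-\boldsymbol{p}_{d_i})$ lie along the goal line and the robots start at rest), so I reduce to scalar coordinates $x_1,x_2$ with goals $g_1,g_2$ and $g_2-g_1=D_G$. Writing $d=x_2-x_1$ and subtracting the two PD equations yields the decoupled gap dynamics $\ddot{d}=-k_p(d-D_G)-k_v\dot{d}$, whose equilibrium is $d=D_G$. At the start of Phase 3 one has $d(0)=D_s$ and $\dot{d}(0)=0$; since the system is overdamped its modes have negative real roots, and a rest-to-rest response starting below equilibrium is monotone, so $\dot{d}(t)\ge 0$ for all $t$. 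Because $D_G>D_s$, $d$ increases monotonically from $D_s$ to $D_G$ and never re-enters $\{d<D_s\}$; thus $\norm{\Delta\boldsymbol{p}_{12}}$ is non-decreasing and $h_{12}\ge 0$ throughout, establishing safety. Meanwhile each error $\boldsymbol{p}_i-\boldsymbol{p}_{d_i}$ obeys the stable PD dynamics and decays to zero, so both robots reach their goals; since the PD control is strictly feasible against the now-slack CBF constraint and is nonzero until the goal is reached, the conditions of \cref{deadlock_definition} are never met again.

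The main obstacle, and where I would spend the most care, is the interface between Phases 2 and 3: I must argue rigorously that the swap forces $\mathrm{sign}(d(0))=\mathrm{sign}(D_G)$, since the un-swapped deadlock state has $d$ of the opposite sign (equilibrium $+D_G$ but $d=-D_s$), which would drag the gap through the unsafe band $\{|d|<D_s\}$. A second, more technical gap is that the Phase 2 angular law converges only asymptotically, so the switch in practice occurs at a small tolerance with $\dot{d}(0)\approx 0$ rather than exactly zero; I would either invoke a finite-time variant of the feedback-linearizing law or show that the strict inequality $D_G>D_s$ leaves enough margin that the monotonicity conclusion is robust to a sufficiently small residual $\dot{d}(0)$. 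The extension to $N=3$ then follows the same template, using the collinearity and triangle configurations established in \cref{touching3,nonempty3}.
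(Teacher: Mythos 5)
Your proposal is correct and follows essentially the same route as the paper's proof: the same three-phase decomposition, the same facts at the Phase-1/Phase-2 interface (separation exactly $D_s$, collinearity of robots and goals), the same alignment-by-rotation argument in Phase 2, and the same reduction of Phase 3 to the scalar overdamped gap dynamics $\ddot{d}=-k_p(d-D_G)-k_v\dot{d}$ started at rest from $d=D_s<D_G$, giving a monotonically non-decreasing separation and hence safety plus exponential convergence to the goals. The only differences are cosmetic or additive: you argue monotonicity qualitatively where the paper writes out the explicit solution with roots $\omega_{1,2}$ and coefficients $c_1,c_2$, and you correctly flag that the Phase-2 angular law converges only asymptotically (the paper simply asserts finite-time convergence and an exact switching state $\theta(t_2)=\beta$, $\dot{\theta}(t_2)=0$), proposing reasonable remedies for that gap.
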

\hspace{-1cm}\begin{proof}
We would like to show  once phase three control begins, the robots will never fall back in deadlock. We will do this by showing  the distance between the robots is non-decreasing, once phase three control starts. 
% Recall  in deadlock, the robots are separated by the safety distance $D_s$ (\textit{i.e.} end of phase 1). Additionally, recall  phase two controller (1) rotates the assembly of the robots by $\pi$ from its initial orientation and (2) makes sure  the distance between the robots remains. Therefore, at the end of phase 2 control, the distance between the robots is still $D_s$. Now, we are interested in showing  $\norm{\Delta \boldsymbol{p}_{21}(t)}\vert_{t\geq t_2}$ is non-decreasing or alternatively, $\frac{d \norm{\Delta \boldsymbol{p}_{21}(t)}}{dt}\vert_{t\geq t_2}\geq 0$. 
\begin{figure}[t]
\centering
\subfigure[Two Robots Sims.]{\label{fig:cc}\includegraphics[trim={0.0cm 0.0cm 2.7cm 0cm},clip, width=.32\linewidth,height=3.3cm]{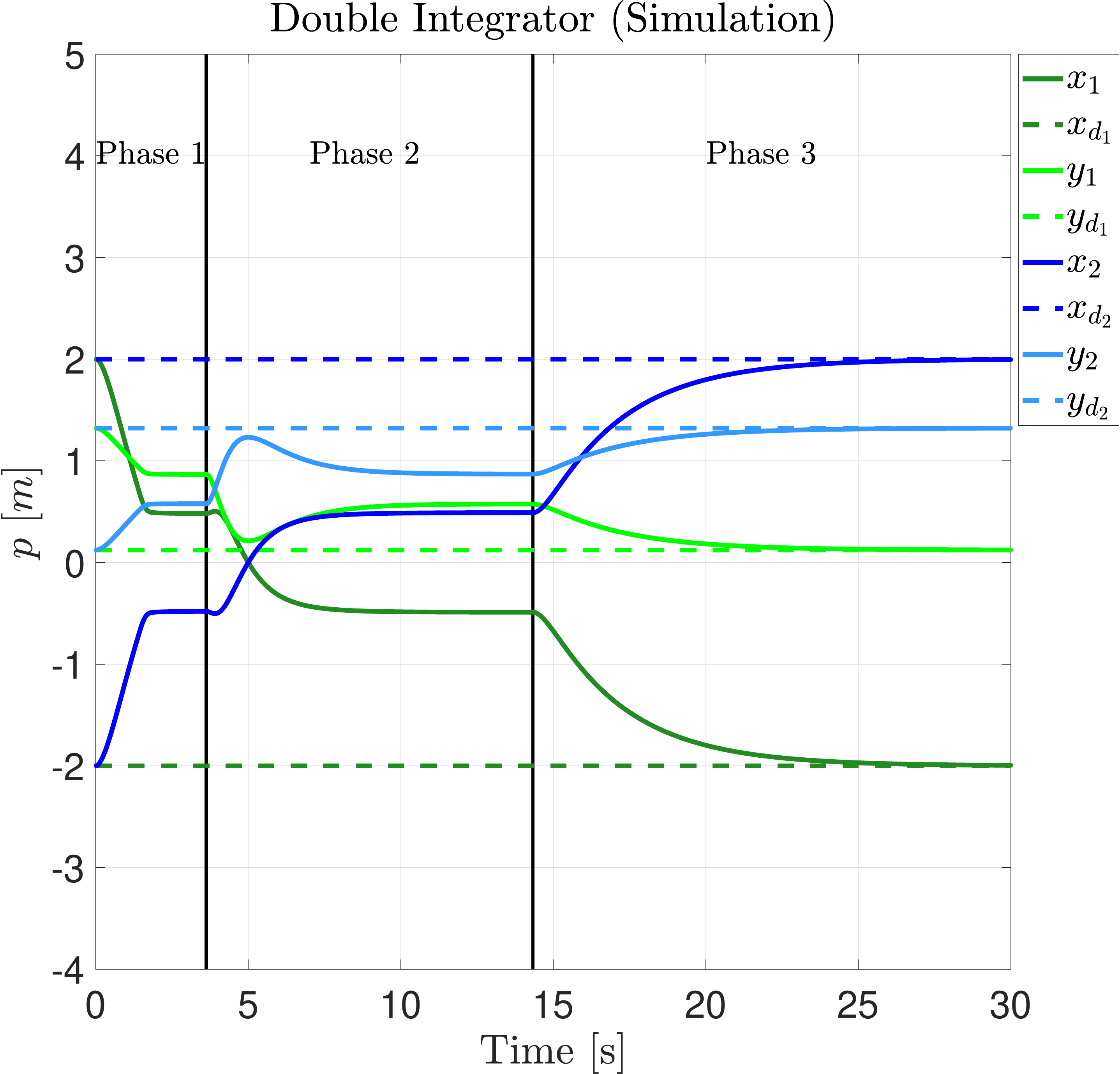}}
% \subfigure[Nonholonomic Simulation (Phase 2 using $\omega=1,v=\pm \omega D_s, t \in (t_1,t_2)$)]{\label{fig:dd}\includegraphics[trim={0.0cm 0.0cm 2.72cm 0cm},clip,width=.32\linewidth,height=3.3cm]{nonsim.eps}}
\subfigure[Expmts. with Kheperas.]{\label{fig:ee}\includegraphics[width=.32\linewidth,height=3.3cm]{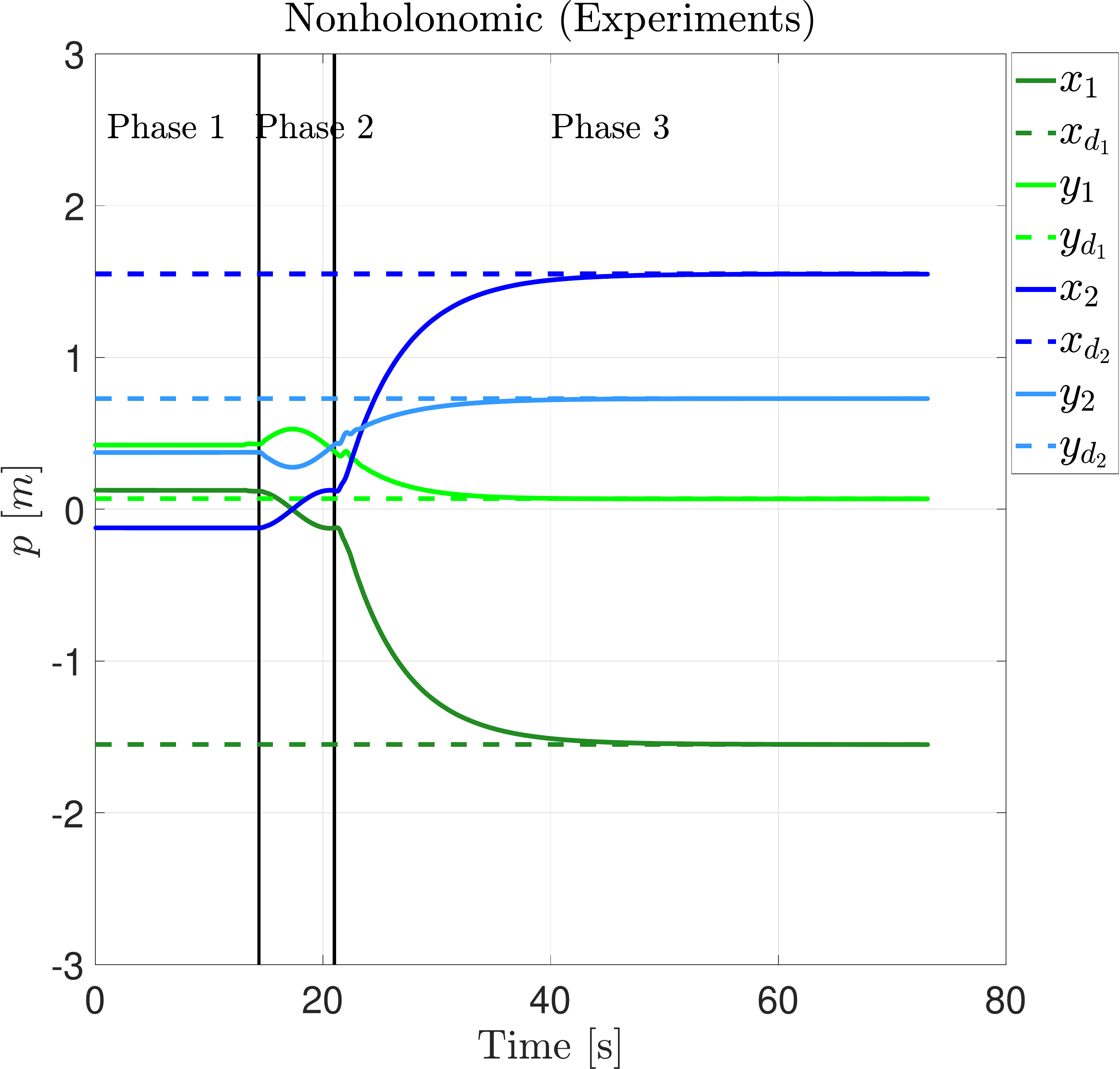}}
\subfigure[Three Robots Sims.]{\label{fig:dd}\includegraphics[trim={0.0cm 0.0cm 0.5cm 0cm},clip,width=.3\linewidth,height=3.3cm]{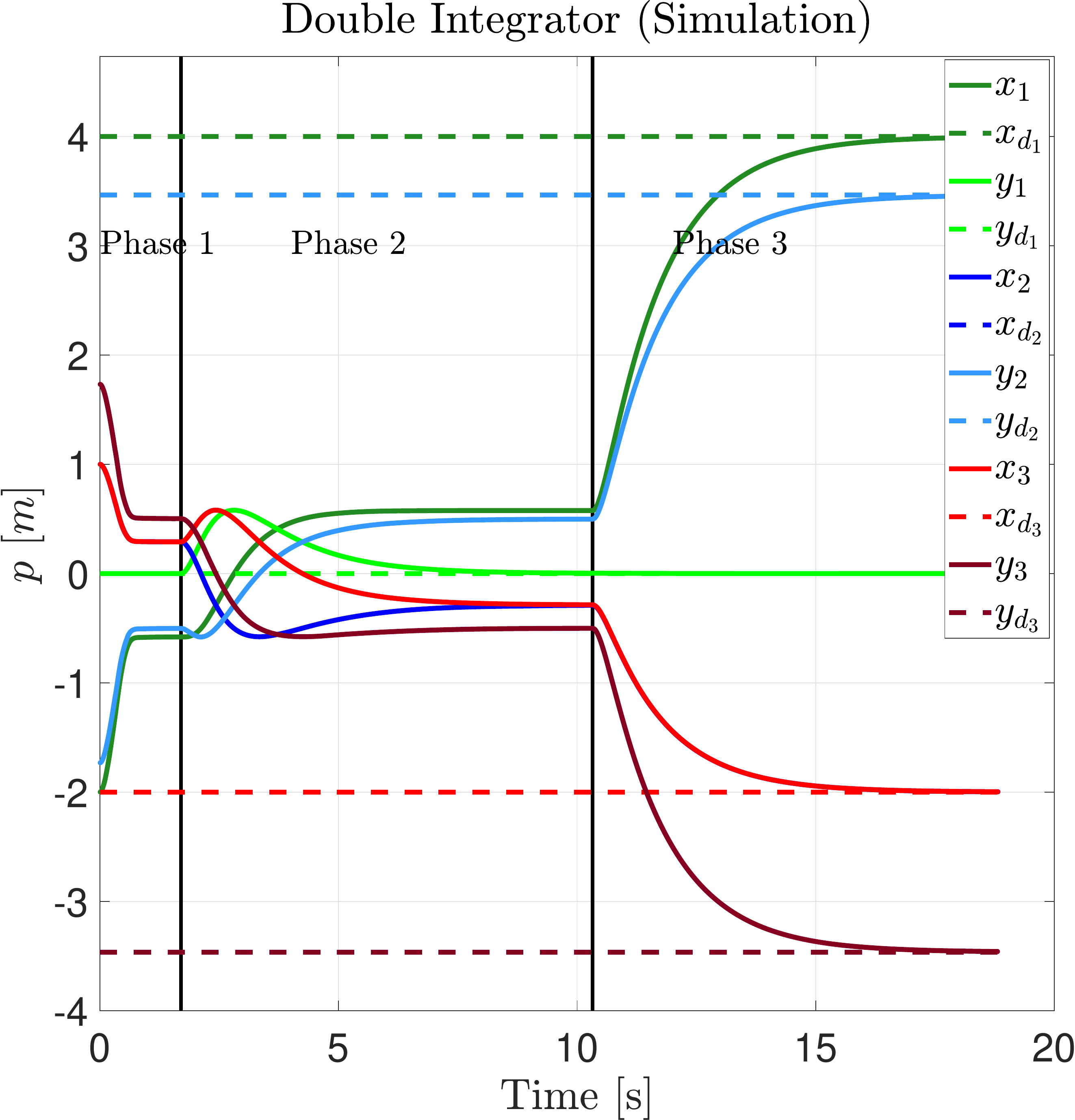}}
\setlength{\belowcaptionskip}{-15pt}
\caption{Positions of robots from deadlock resolution algorithm. In all figures, final positions converge to desired positions in Phase 3. Videos at \url{https://tinyurl.com/y4ylzwh8}}
\label{fig:simexpres}
\end{figure}
We break this proof into three parts consistent with the three phases: \\ \\
\textbf{Phase 1 $\rightarrow$ Phase 2:}
Let $t=t_1$ be the time at which phase 1 ends (and phase 2 starts) \textit{i.e.} when robots fall in deadlock. In \cref{touching} we showed  in deadlock $\norm{\Delta \boldsymbol{p}_{21}}=D_s$, and in \cref{nonempty} we showed  the positions of robots and their goals are collinear. So at the end of phase 1, $\Delta \boldsymbol{p}_{21}(t_{1})=D_s\boldsymbol{\hat{e}}_{\beta + \pi}$. The goal vector $\Delta \boldsymbol{p}_{d_{21}}(t)\coloneqq \boldsymbol{p}_{d_2}-\boldsymbol{p}_{d_1}=D_G\boldsymbol{\hat{e}}_{\beta} \mbox{ }\forall t>0$. Moreover, since the robots are static in deadlock, $\Delta \boldsymbol{v}_{21}(t_{1})=\boldsymbol{0}$\\  \\
\textbf{Phase 2 $\longrightarrow$ Phase 3:}
The initial condition of phase two is the final condition of phase one  \textit{i.e.} $\Delta \boldsymbol{p}_{21}(t_{1})=D_s\hat{e}_{\beta +\pi}$ and $\Delta \boldsymbol{v}_{21}(t_{1})=\boldsymbol{0}$. In phase two, we use feedback linearization to rotate the assembly of robots making sure  the distance between them stays at $D_s$, until the orientation of the vector $\Delta \boldsymbol{p}_{21}(t)=D_s \boldsymbol{\hat{e}}_{\theta(t)}$ aligns with  $\Delta \boldsymbol{p}_{d_{21}}=D_G\boldsymbol{\hat{e}}_{\beta}$ (\textit{see appendix for controller derivation}). Once done, $\exists$ a time $t_2$ at which $\theta(t_2)=\beta$. Moreover, at $t=t_2$, the robots are no longer moving, hence their velocities are zero, hence, $\Delta \boldsymbol{p}_{21}(t_2)=D_s \boldsymbol{\hat{e}}_{\beta},\mbox{ }\Delta \boldsymbol{v}_{21}(t_{2})=\boldsymbol{0}$. These states are the final condition for phase 2 and initial for phase 3.\\ \\
\textbf{Phase 3 $\longrightarrow \infty:$}
In this phase, the initial conditions are $\Delta \boldsymbol{p}_{21}(t_2)=D_s \boldsymbol{\hat{e}}_{\beta}$ and $\Delta \boldsymbol{v}_{21}(t_{2})=\boldsymbol{0}$. Also, note  the dynamics of phase 3 control are specified by the prescribed PD controllers. The dynamics of relative positions and velocities are:
\begin{align}
\Delta \dot{\boldsymbol{p}}_{21} &= \Delta {\boldsymbol{v}}_{21} \nonumber \\
\Delta \dot{\boldsymbol{v}}_{21} &= -k_p(\Delta \boldsymbol{p}_{21}- \Delta \boldsymbol{p}_{d_{21}}) -k_v\Delta \boldsymbol{v}_{21}, 
\end{align}
where $\Delta \boldsymbol{p}_{d_{21}}=D_G\boldsymbol{\hat{e}}_{\beta}$.  Now, we will do a coordinate change as described next. Let $\Delta \tilde{\boldsymbol{p}}_{21} \coloneqq R_{-\beta} \Delta {\boldsymbol{p}}_{21}$ and $\Delta \tilde{\boldsymbol{v}}_{21} \coloneqq R_{-\beta} \Delta {\boldsymbol{v}}_{21}$. The initial conditions in these coordinates are $\Delta \tilde{\boldsymbol{p}}_{21}(t_2)= R_{-\beta} D_s\boldsymbol{\hat{e}}_{\beta}=(D_s,0)$ and $\Delta \tilde{\boldsymbol{v}}_{21}(t_{2})=\boldsymbol{0}$ \textit{i.e.} $\Delta \tilde{p}_{21}^x(t_2)=D_s,\Delta \tilde{p}_{21}^y(t_2)=0,\Delta \tilde{v}_{21}^x(t_2)=0 \mbox{ and }\Delta \tilde{v}_{21}^y(t_2)=0$. The dynamics in new coordinates are:
\begin{align}
\Delta \dot{\tilde{\boldsymbol{p}}}_{21} &= \Delta {\tilde{\boldsymbol{v}}}_{21} \nonumber \\
\Delta \dot{\tilde{\boldsymbol{v}}}_{21} &= -k_p(\Delta \tilde{\boldsymbol{p}}_{21}- R_{-\beta}\Delta \boldsymbol{p}_{d_{21}}) -k_v\Delta \tilde{\boldsymbol{v}}_{21}.
\end{align}
Using these coordinates, note  $R_{-\beta}\Delta \boldsymbol{p}_{d_{21}}=(D_G,0)$. 
 Note from the dynamics and the initial conditions for the $y$ components of relative position and velocities  the only solution is the zero solution \textit{i.e.} $\Delta \tilde{p}_{21}^y(t) \equiv 0$ and $\Delta \tilde{v}_{21}^y(t) \equiv 0$   $\forall$ $t\geq t_2$. As for the $x$ component, we can compute the solution to be $\Delta \tilde{p}_{21}^x(t)=c_1e^{\omega_1(t-t_2)}+c_2e^{\omega_2(t-t_2)}+D_G$ and  $\Delta \tilde{v}_{21}^x(t)=c_1\omega_1e^{\omega_1(t-t_2)}+c_2\omega_2e^{\omega_2(t-t_2)}$. Here
\begin{align}
\omega_{1,2} =\frac{1}{2}\big(-k_v \pm \sqrt{k^2_v-4k_p}\big), \mbox{ }
c_1=\frac{\omega_2(D_G-D_s)}{\omega_1-\omega_2} , \mbox{ }
c_2=-\frac{\omega_1(D_G-D_s)}{\omega_1-\omega_2} \nonumber,
\end{align}
and $\omega_1-\omega_2=-\sqrt{k^2_v-4k_p}$ and $\omega_1\omega_2=k_p$. After substituting these values, we get,  $\Delta \tilde{v}_{21}^x(t)=\frac{k_p(D_s-D_G)(e^{\omega_1(t-t_2)}-e^{\omega_2(t-t_2)})}{\sqrt{k^2_v-4k_p}}$. Now, from the assumptions  PD controllers are overdamped \textit{i.e} $k_v, k^2_v-4k_p>0$ and  $D_G>D_s$, it follows   $\Delta \tilde{v}_{21}^x(t)=\frac{k_p(D_s-D_G)(e^{\omega_1(t-t_2)}-e^{\omega_2(t-t_2)})}{\sqrt{k^2_v-4k_p}} \geq 0$ and $\Delta \tilde{p}_{21}^x(t)= \frac{D_G-D_s}{\sqrt(k_v^2-4k_p)}\bigg(\omega_1e^{\omega_2(t-t_2)}-\omega_2e^{\omega_1(t-t_2)}\bigg)+D_G\geq D_s \geq 0$. Finally, note  $\frac{d \norm{\Delta \boldsymbol{p}_{12}(t)}}{dt} = \frac{\Delta \tilde{\boldsymbol{p}}^T_{21}(t)\Delta \tilde{\boldsymbol{v}}_{21}(t)}{\norm{\Delta \tilde{\boldsymbol{p}}_{21}(t)}}  \geq 0$. Hence, the distance between the robots is non-decreasing \textit{i.e.} the robots never fall in deadlock.  Additionally, since the robots use a PD-type controller, their positions exponentially stabilize to their goals. 
\end{proof} 

\section{Conclusions}

In this paper, we analyzed the characteristic properties of deadlock  results from using CBF based QPs for avoidance control in multirobot systems. We demonstrated how to interpret deadlock as a subset of the state space and proved  in deadlock, the robots are on the verge of violating safety. Additionally, we showed  this set is non-empty and bounded. Using these properties, we devised corrective control algorithm to force the robots out of deadlock and ensure task completion. We also demonstrated  the number of valid geometric configurations in deadlock increases approximately exponentially with the number of robots which makes the analysis and resolution for $N \geq 4$ complex. There are several directions we would like to explore in future. Firstly, we want to extend this to $N\geq 4$ case. In the $N=4$ case, we determined a large number of admissible geometric configurations. We find  there exist bijections among some of these configurations depending on the number of total active constraints. We believe this property can be exploited to reduce the complexity down to the equivalence classes of these bijections.  We will exploit this line of approach to simplify analysis for $N\geq 4$ cases. Secondly, we are interested in identifying the basin of attraction of the deadlock set to formally characterize all initial conditions of robots  lead to deadlock. Tools from backwards reachability set calculation can be used to compute the basin of attraction. Finally, although we focused on CBF based QPs for analysis, we will extend this to other reactive methods such as velocity obstacles and tools using value functions, and explore the properties  make a particular algorithm immune to deadlock.
\appendix
\section*{Appendix}
In this appendix, we give  further details on the family of configurations adimissible in category B of three robot deadlock. We also give the derivation of the feedback linearization controller used in Phase 2 of deadlock resolution algorithm.
%\section{Geometric Configurations Admissible in the Four Robot System Deadlock}
%In Fig. \ref{fig:deadlock_cartoon_4}, we show all the geometric configurations admissible in system deadlock. Each graph represents one valid geometric configuration. The vertex in the graph represents a robot and an edge between two vertices represents an active collision avoidance constraint. Robots connected by an edge are separated by the safety margin $D_s$. 
%\begin{figure}
%	\centering
%	\includegraphics[width=\linewidth]{FourRobotCase.jpg}
%	\setlength{\belowcaptionskip}{-15pt}
%	\caption{All Geometric Configurations Admissible in Four Robot Deadlock}
%	\label{fig:deadlock_cartoon_4}
%\end{figure}
\section{Family of states in three robot deadlock}
\begin{theorem} 
	\label{nonempty32}
	$\forall\mbox{ } k_p,k_v,D_s,R$ $>0$, $\theta \in (-\frac{\pi}{6},0)$, $\alpha \in (\frac{\pi}{6},\frac{\pi}{2})$ and $\boldsymbol{p}_{d_i}=R\hat{\boldsymbol{e}}_{2\pi (i-1)/3}$ where $i=\{1,2,3\}$, there exists a family of states in Category B, $(\boldsymbol{z}^*_1,\boldsymbol{z}^*_2,\boldsymbol{z}^*_3)_{(\theta,\alpha)} \in  \mathcal{D}_{system}$ where $\boldsymbol{z}^*_i=(\boldsymbol{p}^*_i,\boldsymbol{0})$. Here $\boldsymbol{p}^*_i$ are parametrized by $\theta,\alpha$ and are proposed as follows:  
	\begin{align}
	\boldsymbol{p}^*_1 &= \frac{-1}{2\sin{(\alpha-\theta)}} \left[\begin{matrix}
	2D_s\cos{\theta}\sin{(\alpha-\theta)}+2R\cos{\theta}\sin{(\alpha-\frac{\pi}{3})+2R\cos{\alpha}\sin{\theta}}\\
	\sin{\theta}\big(3R\sin{\alpha}+2D_s\sin{(\alpha-\theta)}-\sqrt{3}R\cos{\alpha}\big) \end{matrix} \right]\nonumber \\
	\boldsymbol{p}^*_2&=\boldsymbol{p}^*_1+D_s\hat{\boldsymbol{e}}_{\theta} \nonumber \\
	\boldsymbol{p}^*_3&=\boldsymbol{p}^*_2+D_s\hat{\boldsymbol{e}}_{\alpha}
	\end{align}
	(assuming that robot 2 has both constraints active).   
\end{theorem}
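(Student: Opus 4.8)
The plan is to prove \cref{nonempty32} by verification, exactly in the style of \cref{nonempty}: I exhibit the candidate states $(\boldsymbol{z}^*_1,\boldsymbol{z}^*_2,\boldsymbol{z}^*_3)$ with $\boldsymbol{z}^*_i=(\boldsymbol{p}^*_i,\boldsymbol{0})$ and check, clause by clause, that they lie in the three-robot analogue of $\mathcal{D}_{system}$ (defined analogously to \cref{sysdeadlockdef}). Two clauses are immediate. Since all velocities are zero by construction, $V\boldsymbol{z}^*_i=\boldsymbol{0}$ holds and the prescribed controller reduces to $\hat{\boldsymbol{u}}_i=-k_p(\boldsymbol{p}^*_i-\boldsymbol{p}_{d_i})$. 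The Category~B activity pattern fixes which edges carry a multiplier — robot $2$ active with both neighbours, robots $1$ and $3$ active only with robot $2$ — and by \cref{touching3} an active pair must sit exactly $D_s$ apart; the chained construction $\boldsymbol{p}^*_2=\boldsymbol{p}^*_1+D_s\hat{\boldsymbol{e}}_{\theta}$, $\boldsymbol{p}^*_3=\boldsymbol{p}^*_2+D_s\hat{\boldsymbol{e}}_{\alpha}$ makes $\norm{\Delta\boldsymbol{p}_{12}}=\norm{\Delta\boldsymbol{p}_{23}}=D_s$ automatic, so those two constraints are active with $h_{12}=h_{23}=0$.

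The core step is the force-balance clause $\hat{\boldsymbol{u}}_i=\tfrac{1}{2}\sum_{k\in\mathcal{A}}\mu^*_k\boldsymbol{a}_k$ of \cref{force_equate3}, and this is what actually pins down $\boldsymbol{p}^*_1$. For the single-constraint robots $1$ and $3$ the balance is the two-robot identity $\hat{\boldsymbol{u}}_i=\tfrac{1}{2}\mu^*\boldsymbol{a}$, so (as in \cref{nonempty}) the goal vector must be collinear with the active edge: $\boldsymbol{p}_{d_1}-\boldsymbol{p}^*_1\parallel\hat{\boldsymbol{e}}_{\theta}$ and $\boldsymbol{p}_{d_3}-\boldsymbol{p}^*_3\parallel\hat{\boldsymbol{e}}_{\alpha}$. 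Each collinearity requirement is the single scalar equation that the perpendicular component vanishes, i.e. $(\boldsymbol{p}_{d_1}-\boldsymbol{p}^*_1)\cdot\hat{\boldsymbol{e}}_{\theta+\pi/2}=0$ and the analogous one at robot $3$; since $\boldsymbol{p}^*_3$ is affine in $\boldsymbol{p}^*_1$, the two equations form a $2\times 2$ linear system in the coordinates of $\boldsymbol{p}^*_1$. Solving it, with $\boldsymbol{p}_{d_i}=R\hat{\boldsymbol{e}}_{2\pi(i-1)/3}$ substituted, yields precisely the closed-form expression stated for $\boldsymbol{p}^*_1$; I would present the derivation this way rather than substituting the given formula blindly, as it explains where the expression originates. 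Once $\boldsymbol{p}^*_1,\boldsymbol{p}^*_2,\boldsymbol{p}^*_3$ are fixed, $\mu^*_{12},\mu^*_{32}$ are read off by magnitude as in \cref{lmult}, and the two-constraint balance $\hat{\boldsymbol{u}}_2=\tfrac{1}{2}\mu^*_{21}\boldsymbol{a}_{21}+\tfrac{1}{2}\mu^*_{23}\boldsymbol{a}_{23}$ becomes a $2\times 2$ system for $(\mu^*_{21},\mu^*_{23})$ whose columns are proportional to $\hat{\boldsymbol{e}}_{\theta}$ and $\hat{\boldsymbol{e}}_{\alpha}$; this is always solvable because $\hat{\boldsymbol{e}}_{\theta}$ and $\hat{\boldsymbol{e}}_{\alpha}$ are non-parallel on the given ranges, so robot $2$'s balance imposes no extra condition on $\boldsymbol{p}^*_1$ and only a sign check on its multipliers.

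What remains — and what I expect to be the main obstacle — is checking the inequality clauses uniformly over the whole parameter box $(\theta,\alpha)\in(-\tfrac{\pi}{6},0)\times(\tfrac{\pi}{6},\tfrac{\pi}{2})$ rather than at one point. Dual feasibility requires $\mu^*_{12},\mu^*_{21},\mu^*_{23},\mu^*_{32}>0$; each multiplier is a trigonometric function of $(\theta,\alpha)$ through the dot products $\boldsymbol{a}^T\hat{\boldsymbol{u}}$, and one must verify strict positivity throughout the box. This is exactly where the specific angle ranges are used, since they keep each goal on the repulsion-consistent side of its active edge, and the honest bookkeeping of these trigonometric signs is the delicate part of the argument. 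The inactive pair $(1,3)$ must be strictly feasible, $\norm{\Delta\boldsymbol{p}_{13}}>D_s$; from the chain $\Delta\boldsymbol{p}_{13}=\boldsymbol{p}^*_1-\boldsymbol{p}^*_3=-D_s(\hat{\boldsymbol{e}}_{\theta}+\hat{\boldsymbol{e}}_{\alpha})$, so $\norm{\Delta\boldsymbol{p}_{13}}=D_s\sqrt{2+2\cos(\alpha-\theta)}$, and the requirement reduces to $\cos(\alpha-\theta)>-\tfrac{1}{2}$, i.e. $\alpha-\theta<\tfrac{2\pi}{3}$, which the ranges guarantee since $\alpha-\theta\in(\tfrac{\pi}{6},\tfrac{2\pi}{3})$. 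Finally $\boldsymbol{p}^*_i\neq\boldsymbol{p}_{d_i}$, hence $\hat{\boldsymbol{u}}_i\neq\boldsymbol{0}$, follows from $\norm{\hat{\boldsymbol{u}}_i}=\tfrac{1}{2}\mu^*\norm{\boldsymbol{a}}>0$. Assembling the verified clauses shows $(\boldsymbol{z}^*_1,\boldsymbol{z}^*_2,\boldsymbol{z}^*_3)\in\mathcal{D}_{system}$, completing the proof.
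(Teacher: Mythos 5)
Your plan takes the same route the paper intends: the paper's own proof of this theorem is a one-line deferral (``similar to the proof of \cref{nonempty}, so it is skipped''), i.e.\ clause-by-clause verification of the three-robot analogue of \cref{sysdeadlockdef}, which is exactly what you outline. Where you genuinely differ is that you \emph{derive} $\boldsymbol{p}^*_1$ rather than substituting the given formula blindly: the single-constraint force balances at robots $1$ and $3$ force the collinearity conditions $(\boldsymbol{p}_{d_1}-\boldsymbol{p}^*_1)\cdot\hat{\boldsymbol{e}}_{\theta+\pi/2}=0$ and $(\boldsymbol{p}_{d_3}-\boldsymbol{p}^*_3)\cdot\hat{\boldsymbol{e}}_{\alpha+\pi/2}=0$, a linear $2\times 2$ system in the coordinates of $\boldsymbol{p}^*_1$ with determinant $\sin(\alpha-\theta)\neq 0$ on the stated ranges; solving it by Cramer's rule (and using $\sin(\alpha-\tfrac{\pi}{3})=\tfrac{1}{2}\sin\alpha-\tfrac{\sqrt{3}}{2}\cos\alpha$) reproduces the displayed expression for $\boldsymbol{p}^*_1$ exactly, so your presentation has the advantage of explaining where the otherwise opaque formula comes from. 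Your handling of robot $2$ (a uniquely solvable $2\times 2$ system for $(\mu^*_{21},\mu^*_{23})$ because $\hat{\boldsymbol{e}}_{\theta}$ and $\hat{\boldsymbol{e}}_{\alpha}$ are independent, so it imposes no further constraint on the positions) and of the inactive pair ($\norm{\Delta\boldsymbol{p}_{13}}=D_s\sqrt{2+2\cos(\alpha-\theta)}>D_s$ precisely because $\alpha-\theta\in(\tfrac{\pi}{6},\tfrac{2\pi}{3})$) are both correct. The one step you announce but do not execute is strict positivity of all four multipliers over the whole box $(\theta,\alpha)\in(-\tfrac{\pi}{6},0)\times(\tfrac{\pi}{6},\tfrac{\pi}{2})$; that is indeed the only nontrivial remaining bookkeeping (a spot check, e.g.\ $\theta=-\pi/12$, $\alpha=\pi/3$, $R=1$, $D_s=0.1$, gives all multipliers strictly positive, consistent with each goal lying beyond the neighbour along its active edge), and since the paper never carries it out either, your proposal is, if anything, more complete than the printed proof.
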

\begin{proof}
	This proof is similar to the proof of Theorem 2 so it is skipped. 
\end{proof}

\section{Derivation of Feedback Linearization Controller for Phase 2}
In the section on deadlock resolution, Phase 2 is the controller we use after Phase 1 based CBF QP once robots have fallen in deadlock. We describe the derivation of this controller here. This derivation is done for the two robot case. Extension to three robot case is trivial. 

Recall that once in deadlock, we know that the robots are exactly separated by the safety distance as was shown in Theorem 2. This means any arbitrary perturbation applied to the system can potentially cause the robots to cross the safety margin and collide with one another. Therefore, any intervention to resolve deadlock should ensure that the minimum safety margin $D_s$ is maintained. More specifically, the intervening controller must ensure that $h_{12}(t)\geq0$.  We now demonstrate how using tools from feedback linearization, we can synthesize a controller that guarantees safety. Recall the dynamics of a robot below:
\begin{align}
\left[\begin{matrix}
\dot{x} \\
\dot{y} \\
\dot{v}_{x}\\
\dot{v}_{y}
\end{matrix}\right] &= \left[\begin{matrix}
0 & 0 & 1 & 0 \\
0 & 0 & 0 & 1 \\
0 & 0 & 0 & 0\\
0 & 0 & 0 & 0
\end{matrix}\right]\left[\begin{matrix}
{x} \\
{y} \\
{v}_{x}\\
{v}_{y}
\end{matrix}\right]  + \left[\begin{matrix}
0 &0\\
0 &0 \\
1 &0\\
0 &1
\end{matrix}\right] \left[\begin{matrix}
u_{x}\\
u_{y}
\end{matrix}\right] 
\end{align}
We assume that the initial state for this robot corresponds to the time instant when the system is in deadlock ($t=t_1$). We identify two output functions which we would like to stabilize to desired values to ensure that the robots maintain the safety distance $D_s$ and rotate around each other to swap positions. The idea behind swapping positions is to ensure that at a later time, feasible controls will always exist to make the robots converge to their goals.  These output functions are described below:
\begin{enumerate}
	\item The distance between robots should not change \textit{i.e.} the robots should neither move further apart nor move closer towards one another. If we define $r(t) = \norm{\Delta \boldsymbol{p}} = \sqrt{(x_2(t)-x_1(t))^2+(y_2(t)-y_1(t))^2}$ to be the distance between the robots and $R(t) = \frac{1}{2}r^2(t)$, then we would like $y_{o_1}=\frac{dR}{dt} \longrightarrow 0$ for $\forall t \in [t_{d},t^*]$.
	\item The assembly of the two robots should rotate as a rigid body to align $\Delta \boldsymbol{p}_{12}$ with $\Delta \boldsymbol{p}_{d_{12}}$. If we define $\theta(t)=\mbox{tan}^{-1}(\frac{y_2(t)-y_1(t)}{x_2(t)-x_1(t)})$, then we would like $\theta(t) \longrightarrow \beta$ where $\beta = \mbox{tan}^{-1}(\frac{y_{d_2}-y_{d_1}}{x_{d_2}-x_{d_1}})$. This rotation of the assembly (and not individual robots) will guarantee that for each robot, there will be at-least one time instant at which the prescribed control input $\boldsymbol{\hat{u}}$ will become feasible. Whenever such a feasibility flag is turned on, one can switch the control to $\boldsymbol{\hat{u}}$ and follow it thereafter. 
\end{enumerate}
Based on first objective, we note that:
\begin{align}
y_{o_1} &=\frac{dR(t)}{dt} \nonumber \\
&= (x_2-x_1)(v_{x_2}-v_{x_1}) +  (y_2-y_1)(v_{y_2}-v_{y_1}) \nonumber \\
\dot{y}_{o_1} &= (x_2-x_1)(u_{x_2}-u_{x_1}) +  (y_2-y_1)(u_{y_2}-u_{y_1}) \nonumber \\ &+  (v_{x_2}-v_{x_1})^2 + (v_{y_2}-v_{y_1})^2 \nonumber \\
&\coloneqq -k_1y_{o_1}
\end{align}
\begin{align}
\label{obj1}
\implies &(x_2-x_1)(u_{x_2}-u_{x_1}) +  (y_2-y_1)(u_{y_2}-u_{y_1}) = \nonumber \\&\underbrace{-k_1y_{o_1} -  \{(v_{x_2}-v_{x_1})^2 
	+ (v_{y_2}-v_{y_1})^2\}}_{b_1}
\end{align}
As long as $x_2 \neq x_1$ or $y_2 \neq y_1$, we can compute controllers $\boldsymbol{u}_1=(u_{x_1},u_{y_1})$ and $\boldsymbol{u}_2=(u_{x_2},u_{y_2})$ such that $\dot{y}_{o_1}=-k_1y_{o_1}$. By selecting $k_1>>0$, we can ensure that $y_{o_1}\longrightarrow 0$ exponentially. Next, based on the second objective, note that:
\begin{align}
y_{o_2} &=\dot{\theta} \nonumber \\
& = \frac{(x_2-x_1)(v_{y_2}-v_{y_1}) -  (y_2-y_1)(v_{x_2}-v_{x_1})}{(x_2-x_1)^2+(y_2-y_1)^2} \nonumber \\
& = \frac{(x_2-x_1)(v_{y_2}-v_{y_1}) -  (y_2-y_1)(v_{x_2}-v_{x_1})}{R} \nonumber \\
\dot{y}_{o_2} &= \frac{{R[(x_2-x_1)(u_{y_2}-u_{y_1}) -  (y_2-y_1)(u_{x_2}-u_{x_1})]}}{R^2} \nonumber \\
&-\frac{\dot{R}[(x_2-x_1)(v_{y_2}-v_{y_1}) -  (y_2-y_1)(v_{x_2}-v_{x_1})]}{R^2} \nonumber \\
&= \frac{{[(x_2-x_1)(u_{y_2}-u_{y_1}) -  (y_2-y_1)(u_{x_2}-u_{x_1})]}}{R} \nonumber \\
&- \frac{y_{o_1}y_{o_2}}{R}  \nonumber \\
& \coloneqq -k_p(\theta-\beta)-k_vy_{o_2} 
\end{align}
\begin{align}
\label{obj2}
\implies &(x_2-x_1)(u_{y_2}-u_{y_1}) -  (y_2-y_1)(u_{x_2}-u_{x_1})  \nonumber \\&=  \underbrace{y_{o_1}y_{o_2}-k_pR(\theta-\beta)-k_vRy_{o_2}}_{b_2}
\end{align}
As long as $x_2 \neq x_1$ or $y_2 \neq y_1$, we can compute controllers $\boldsymbol{u}_1=(u_{x_1},u_{y_1})$ and $\boldsymbol{u}_2=(u_{x_2},u_{y_2})$ such that $\ddot{\theta}=-k_p(\theta-\beta) -k_v\dot{\theta}$. By selecting $k_p>0$, $k_v>0$ and $k_v^2-4k_p>0$, we can ensure that $\theta \longrightarrow \beta$ and $y_{o_2}=\dot{\theta}\longrightarrow 0$ exponentially. We now represent equations \cref{obj1} and \cref{obj2} in a more compact form. Denote $\delta x =(x_2-x_1)$ and $\delta y =(y_2-y_1)$
\begin{align}
\label{linearsystem1}
\left[\begin{matrix} -\delta x & -\delta y & \delta x & \delta y \\ \delta y & -\delta x & -\delta y & \delta x\end{matrix}\right] \left[\begin{matrix}
u_{x_1} \\ u_{y_1} \\ u_{x_2} \\ u_{y_2}
\end{matrix}\right] = \left[\begin{matrix}
b_1 \\ b_2
\end{matrix}\right]
\end{align}
If we additionally, impose the requirement that $\boldsymbol{u}_1 = -\boldsymbol{u}_2$, we can further reduce \cref{linearsystem1} as below:
\begin{align}
\underbrace{\left[\begin{matrix}
	-2\delta x & -2\delta y \\ 2\delta y& -2\delta x
	\end{matrix}\right]}_{\tilde{A}}\left[\begin{matrix}
u_{x_1} \\ u_{y_1} 
\end{matrix}\right] = \underbrace{\left[\begin{matrix}
	b_1 \\ b_2
	\end{matrix}\right]}_{\boldsymbol{b}} \nonumber \\
\implies \tilde{A}\boldsymbol{u}_1 = \boldsymbol{b}
\end{align}
We impose this requirement to ensure that the centroid of $\boldsymbol{p}_1,\boldsymbol{p}_2$ remains static. Thus, we can compute $\boldsymbol{u}_1 = \tilde{A}^{-1}\boldsymbol{b}$ and $\boldsymbol{u}_2 = -\boldsymbol{u}_1$. Using these controllers, we can provably guarantee rotation of the assembly of the two robots while simultaneously ensuring that the safety distance criteria is not violated. Since the distance between the robots remains unchanged, we know that \textit{i.e} $\norm{\boldsymbol{\Delta p}(t)} = D_s, \forall t \in [t_{d},\infty]$. This further implies that:
\begin{align}
\boldsymbol{\Delta p}^T(t)\boldsymbol{\Delta v}(t) = 0 \mbox{   }\forall t \in [t_{d},\infty) \nonumber \\
\implies h_{12}(\boldsymbol{z_1},\boldsymbol{z_2}) = \sqrt{2(\alpha_1 + \alpha_2)(\norm{\Delta \boldsymbol{p}_{12}}-D_{s})} \nonumber \\ 
+ \frac{\Delta \boldsymbol{p}^T_{12}\Delta \boldsymbol{v}_{12}}{\norm{\Delta \boldsymbol{p}_{12}}}
\equiv 0  \mbox{    }\forall t \in [t_{d},\infty) \nonumber \\
\implies \boldsymbol{Z}(t) \in \partial \mathcal{C}  \mbox{    } \forall t \in [t_{d},\infty) \nonumber
\end{align}
Additionally, since $\boldsymbol{u}_1(t),\boldsymbol{u}_2(t) \neq 0$, we can say that $\boldsymbol{Z}(t) \in \partial \mathcal{C} \cap \mathcal{D}^c \mbox{    }\forall t \in [t_{d},\infty)$. As a result of this controller, we know that the assembly of the robots will rotate such until $\theta$ converges to $\beta$. Once converged, we can then switch the control to the prescribed PD controllers \textit{i.e.} $\hat{\boldsymbol{u}_1}$ and $\hat{\boldsymbol{u}_2}$. These controls constitute the final phase of our three-phase control algorithm for deadlock resolution. 
Note, that in phase 1, safety was guaranteed by means of feasibility of the QP, while in phase 2, safety is guaranteed by virtue of our construction of the controller as we showed. The third phase \textit{i.e.} the prescribed PD controllers will also guarantee safety given the initial conditions that mark the start of phase 3 because the distance between the robots following the start of phase 3 is non-decreasing as shown in Theorem 6 in the paper.

\bibliographystyle{IEEEtran}
\bibliography{cmu}
\end{document}